\newtheorem{theorem}{Theorem}[section]
\newtheorem{lemma}[theorem]{Lemma}
\newcommand{\vecp}{{\bf p}}
\newcommand{\onehalf}{{\textstyle \frac{1}{2}}}
\newcommand{\PreserveBackslash}[1]{\let\temp=\\#1\let\\=\temp}
\newcolumntype{C}[1]{>{\PreserveBackslash\centering}p{#1}}
\newcolumntype{R}[1]{>{\PreserveBackslash\raggedleft}p{#1}}
\newcolumntype{L}[1]{>{\PreserveBackslash\raggedright}p{#1}}
\title{Pairwise coupling of convolutional neural networks for better explicability of classification systems}
\author[a,b]{Ondrej \v Such}
\address[a]{Mathematical Institute of Slovak Academy of Sciences, Bansk\'a Bystrica, Slovakia}
\address[b]{University of \v Zilina, \v Zilina, Slovakia}
\author[a]{Andrea Tinajov\'a}
\author[b]{Katar\'ina Bachrat\'a}
\author[b]{Peter Tar\'abek}
\begin{document}

\begin{frontmatter}

\begin{abstract}
	
We examine several aspects of explicability of a classification system built from neural networks. The first aspect is the pairwise explicability, which is the ability to provide the most accurate prediction when the range of possibilities is narrowed to just two. Next we consider explicability in development, which means ability to make incremental improvement in prediction accuracy based on observed deficiency of the system. Inherent stochasticity of neural network-based classifiers can be interpreted using likelihood randomness explicability. Finally, sureness explicability indicates confidence of the classifying system to make any prediction at all. 

These concepts are examined in the framework of pairwise coupling, which is a nontrainable meta-model that originated during development of support vector machines. Several methodologies are evaluated, of which the key one is shown to be the choice of the pairwise coupling method. We compare two methods -- the established Wu-Lin-Weng method with the recently proposed Bayes covariant method. Our experiments indicate that the Wu-Lin-Weng method gives more weight to a single  pairwise classifier, whereas the latter tries to balance information from the whole matrix of pairwise likelihoods. This translates into higher accuracy, and better sureness predictions for the Bayes covariant method.

Pairwise coupling methodology has its costs, especially in terms of the number of parameters (but not necessarily in terms of training costs). However, when additional explicability aspects beyond accuracy are desired in an application, the pairwise coupling models are a promising alternative to the established methodology.

\end{abstract}

\begin{keyword}
pairwise coupling \sep convolutional neural network \sep classification model explicability \sep Bayes covariant method	\sep abstaining classifier
\end{keyword}


\end{frontmatter}

\section{Introduction}

Deep convolutional neural networks \cite{DeepLearning} have emerged as a very powerful type of classification model  that is finding applications across diverse  applications.  A typical neural network model has many millions of parameters embedded in a deep multilayer structure making them very opaque classifiers, whose inner workings are poorly understood. The lack of explicability of deep neural networks is emerging as their critical deficiency. However, one should note that explicability is a multifaceted phenomenon. Let us consider several practical examples illustrating various distinct viewpoints at explicability. 

Deep neural networks are considered for use in scenarios that may result in fatalities. A severe malfunctioning of a system comprising a neural network, such as the Uber autonomous car fatality in Arizona, will have negative impact on the public's acceptance of AI technologies. Therefore, as well as for legal reasons, one would like to have a precise analysis, ideally with the identification of the root cause, why the misprediction occurred. We call this requirement \emph{post-hoc explicability}.

Clearly, it would be much better to have \emph{explicability in development}, when there is still an opportunity to change the classification system in order to prevent a possible malfunction  (mispredictions) during system deployment. However, one runs into a problem that complete retraining of a large network is often too costly, especially since there is no guarantee that it will bring the desired improvement. It would be desirable to perform only \emph{incremental improvement} of the system, so that most of the training effort is retained. Yet, in standard neural networks it is next to impossible to identify the subset of weights or a substructure of a large network that causes erroneous predictions. 

Consider next a physician specialist who is being assisted by a system employing a convolutional neural network in diagnosis based on patient data (e.g. X-ray, MRI, or histology). A convolutional neural network may arrive at a class prediction while the specialist may feel that  a different class is the correct one. Having narrowed the set of possibilities to a pair of classes, it would be helpful for the classification system to provide the most precise prediction possible given that there are only two classification outcomes under consideration. An improvement is not possible in classification systems that exhibit independence of irrelevant alternatives e.g. multinomial (softmax) regression. But if an improvement is possible, we say that a system exhibits \emph{pairwise explicability}.

Neural networks are affected by multiple randomness effects - from initialization through random dropout layers to random ordering of training samples in batches. To the end user, this \emph{prediction randomness} is rarely explained, mainly because it would be costly to train many networks to obtain meaningful measures of randomness for the predictions. We shall say that systems providing an explanation of stochasticity of predictions  have \emph{likelihood randomness explicability}.

Abstaining classifiers \cite{abstaining1} exhibit yet another take on explicability, which we call \emph{sureness explicability}. They are able to decide on one extra outcome - "based on my training data I don't feel confident to make a prediction", which is a very human-like behavior.

Given the complex nature of explicability and its varied benefits, one may expect that engineering explicability into classification systems entails costs and poses hurdles during model development. In our paper we examine these tradeoffs for a class of models constructed by pairwise coupling from convolutional neural networks. These models attack primarily the problem of pairwise explicability, but we will examine them also from the other viewpoints outlined here.

Organization of the paper is as follows. In Section 2 we will review principles of the pairwise coupling methodology. In Section 3 we will  provide  experimental details including the description of convolutional network architectures. In Section 4 we examine \emph{pairwise explicability}, by  evaluation of pairwise accuracy of networks trained in pairwise manner. In Section 5 we will evaluate multi-class accuracy of models built from networks trained in pairwise manner. In Section 6 we will illustrate \emph{explicability in development} by the concept of incremental improvement based on errors in the confusion matrix. In Section 7 we will illustrate \emph{likelihood randomness explicability} by constructing many classification models in order to estimate uncertainty of a prediction. In Section 8 we illustrate \emph{sureness explicability} i.e. how to use pairwise coupling models to detect images that are not consistent with the training set. Open research questions are discussed in the conclusion (Section 9).

\section{Review of pairwise coupling }

In this section we outline the pairwise coupling classification methodology proceeding from general definitions through historic motivation (SVM multi-class classification) to presenting state-of-the-art methods.

\subsection{Classification generalities}

A (hard) classification problem can be defined as search for a classifier function $f: X \rightarrow C$ mapping classified objects to a finite set $C$ of dependent categories (classes) $C_1, \ldots C_c$ \cite{isl}. When $c=2$ we speak of a \emph{binary} classifier.

Often, one solves this problem by solving the \emph{soft} classification problem first. This involves finding a posterior approximating predictor function $\tilde f : X \rightarrow [0,1]^{c}$, where $c$ is the number of classes in $C$ i.e. $C$ is the cardinality $|C|$ of the set $C$. If $\tilde f(x) = \left(\tilde f_1(x), \ldots, \tilde f_ c(x) \right)$ then the prediction is 
$$
f(x) = \arg\max_{i \leq c} \tilde f_i(x).
$$
The main reason for this detour is that it is convenient to optimize a smooth cost function 
of a parametric classification model by a gradient descent method. Such gradient search underlies many machine learning methods ranging from logistic regression to neural networks.

\subsection{Motivation for pairwise coupling}

A notable exception which did not fit the soft classification approach was the support vector machine model (SVM), a non-parametric classification technique very popular  since the late 20th century \cite{Vapnik1, Vapnik2}. The SVM model divides the feature space (or its higher-dimensional embedding via a kernel)  into two subsets by a hyperplane, which is found by solving a quadratic programming problem. The beautiful bisection idea of SVM posed a problem for multi-class classifications problems (when $c> 2$), because there was no obvious generalization of the quadratic programming problem to more than two classes. A methodology has been developed that entailed three major steps. 

The first step was adoption of one-on-one classification paradigm. This paradigm is characterized by requiring creation of all possible  pairwise SVM classifiers $f^{i,j}$ which are able to distinguish between the two classes  $C_i$ and $C_j$ only. There were two immediate advantages to doing so. Since the model required training only a portion of overall multiclass data, each pairwise classifier was easier to train. Moreover, each two-class dataset is more likely to be balanced, which would not be the case if one opted for one-vs-rest approach \cite{Mayoraz}.

The second step was converting hard classification model $f^{i,j}$ to a soft classification model $\tilde f^{i,j}$  by fitting a sigmoid function for each model. This was proposed in work of J. Platt \cite{PlattSVM}. A subtle point in his approach was adoption of uninformative prior on the labels, which avoids overfitting problems of logistic regression.

The third step is known in literature as \emph{pairwise coupling} approach \cite{HT, pairwisePatRec}, which converts the set of pairwise prediction provided by $\tilde f^{i,j}$ to final multi-class posterior prediction.

\subsection{Relationship between pairwise likelihoods and multi-class likelihoods}
To understand pairwise coupling's underlying principles, it is worthwhile to consider its ``reverse'' first. Suppose we are given a soft classifier which produces class posterior vector $p(x) = \bigl(p_1(x), \ldots, p_c(x)\bigr)$ for a sample $x$. Given such classifier and a pair of classes $i,j$ with $1\leq i,j \leq c$, one may construct a set of soft binary (i.e. two-class) classifiers $B_{ij}$, which we call  the \emph{IIA restrictions} of $p$. 

The naming is inspired by the axiom of independence of alternatives (IIA). In individual choice theory the IIA axiom  states that if an alternative $x$ is preferred from a set $T$, and $x$ is also an element of a subset $S$ of $T$, then $x$ should be also the preferred choice from $S$.

The \emph{IIA restriction} classifier quantifies this principle. Its likelihoods are such that the relative likelihood ratio of classes $i$ and $j$ is the same as in the presence of the rest of alternatives.  Therefore the IIA classifier $B_{ij}$ outputs the posterior vector 
\begin{align}
r_{ij}(x) = \Bigl(\dfrac{p_i(x)}{p_i(x) + p_j(x)}, \dfrac{p_j(x)}{p_i(x) + p_j(x)}
			\Bigr),
			\label{eq:iiabin}
\end{align}
which is the unique two-class probability distribution for which relative likelihood of classes is $p_i(x) / p_j(x)$.

Ratios in the equation \eqref{eq:iiabin} may produce singular results if both $p_i(x)$ and $p_j(x)$ are simultaneously zero.  We note that the singularity is avoided for a typical convolutional neural network, where the output of softmax layer cannot produce a zero posterior for any class.

Thus given multi-class prediction $p(x)$ we may construct the \emph{matrix of pairwise likelihoods}
\begin{align}
{\bf R}(x)= \begin{pmatrix}0 & r_{12}(x) & \ldots & r_{1c}(x) \\
            r_{21}(x) & 0 & \ldots &r_{2c}(x) \\
            \vdots &  \vdots & \ddots & \vdots \\
            r_{c1}(x) & r_{c-1,2}(x) & \ldots & 0
            \end{pmatrix}, \label{eq:rx}
\end{align}

In this paper we adopt the convention that on the diagonal of a matrix of pairwise likelihoods  there are always zeros. Then the matrix of pairwise likelihoods has all entries in the interval $[0,1]$ and satisfies 
\begin{align}
{\bf R}(x) + {\bf R}(x)' = \begin{pmatrix}
0 & 1 & \ldots & 1\\
1 & 0 & \ldots & 1 \\
1 & 1 & \ldots & 0  \label{eq:pairreq1}
\end{pmatrix}.
\end{align}

An important fact is that the mapping $\Theta: p(x) \mapsto {\bf R}(x)$ does not lose any information.

\begin{lemma}
	\label{lemma:column}
The nonlinear mapping $p(x) \mapsto {\bf R}(x)$ is injective on the set of nonvanishing posteriors. In fact, if $p_i(x)\not=0$ for all $i$ then it is possible to reconstruct $p(x)$ from any column, or any row of the pairwise likelihood matrix.
\end{lemma}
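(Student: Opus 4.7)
The plan is to give an explicit reconstruction formula for $p(x)$ from a single fixed column (or row) of $\mathbf{R}(x)$; injectivity of $\Theta$ then follows as a corollary. Fix an index $j$ and look at the $j$-th column of $\mathbf{R}(x)$, namely the entries $r_{ij}(x) = p_i(x)/\bigl(p_i(x)+p_j(x)\bigr)$ for $i\neq j$. The key algebraic step is to observe that for every $i\neq j$ we can solve for the likelihood ratio: rearranging gives
\begin{equation}
\frac{p_i(x)}{p_j(x)} \;=\; \frac{r_{ij}(x)}{1 - r_{ij}(x)},
\label{eq:ratio}
\end{equation}
where the right-hand side is well-defined precisely because $1 - r_{ij}(x) = p_j(x)/(p_i(x)+p_j(x)) > 0$ under the nonvanishing assumption $p_j(x) \neq 0$.

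Next I would use the normalization constraint $\sum_{i=1}^{c} p_i(x) = 1$. Writing $p_i(x) = p_j(x)\cdot r_{ij}(x)/(1 - r_{ij}(x))$ for $i\neq j$ and summing, one gets
\begin{equation}
p_j(x) \Bigl( 1 + \sum_{i \neq j} \tfrac{r_{ij}(x)}{1 - r_{ij}(x)} \Bigr) \;=\; 1,
\end{equation}
which uniquely determines $p_j(x)$; substituting back into \eqref{eq:ratio} then recovers every $p_i(x)$. Thus the entire posterior is reconstructed from the single column $j$. The argument for a row is completely symmetric, using instead $r_{ji}(x) = p_j(x)/(p_j(x) + p_i(x))$, which yields the same kind of ratio formula up to an inversion.

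The main (mild) obstacle is making sure every denominator I write down is nonzero: this is exactly what the nonvanishing hypothesis $p_i(x)\neq 0$ for all $i$ buys us, via $1 - r_{ij}(x) = p_j(x)/(p_i(x)+p_j(x)) \in (0,1)$. Injectivity of $\Theta$ on the nonvanishing simplex then follows immediately, since any preimage of $\mathbf{R}(x)$ must coincide with the vector produced by the reconstruction formula above. It is worth remarking that this shows $\mathbf{R}(x)$ carries enormous redundancy: the $c(c-1)$ off-diagonal entries are determined by just $c-1$ free parameters, and any column or row already suffices to invert $\Theta$.
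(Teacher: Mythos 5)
Your reconstruction is correct and complete: the identity $p_i(x)/p_j(x) = r_{ij}(x)/(1-r_{ij}(x))$ together with normalization does uniquely determine $p(x)$ from any single column (and symmetrically from any row), and the nonvanishing hypothesis is used exactly where needed to keep $1-r_{ij}(x)>0$. The paper itself does not spell out a proof but defers to the cited references, and your argument is essentially the standard one found there (it rearranges to the closed form $p_j(x) = \bigl(\sum_{i\neq j} 1/r_{ij}(x) - (c-2)\bigr)^{-1}$ of the column-wise inversion methods the paper mentions), so there is nothing to add.
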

\begin{proof}
See e.g. \cite{SBT}, or \cite{PKPD}.  
\end{proof}

\subsection{Pairwise coupling methods}

\label{sec:pairwise-coupling-methods}

By a \emph{pairwise coupling method} we mean any method
mapping the set of nonnegative matrices ${\bf R} = (r_{ij})$ satisfying \eqref{eq:rx} to the set of probability distributions on $c$ classes.  We say that a pairwise coupling method is \emph{regular} if it 
 inverts the map $\Theta$ from the matrix from multi-class posteriors $p(x)$ to the correspoding pairwise likelihood matrix  ${\bf R}(x)$. 
  Thus if a regular pairwise-coupling method is  given a matrix of pairwise likelihoods constructed from a multi-class vector $\bf p$ by \eqref{eq:iiabin} and \eqref{eq:rx}, the  method should yield the original multi-class distribution.

The requirements on regular pairwise coupling are rather weak, because the mapping is prescribed only on $n-1$ dimensional subspace of $(n^2 -n) / 2$ dimensional parameter space of matrices satifying \eqref{eq:rx}. Therefore there exist many different regular coupling methods.

In our work we opted to use two regular pairwise-coupling methods: the method of Wu-Ling-Wen \cite{WLW}, and the Bayes covariant method \cite{SB}. The former is used in popular LIBSVM library \cite{LIBSVM}, and the latter has been proven to be a unique method satisfying additional  hypotheses\cite{SB}.

\subsubsection{Method of Wu-Ling-Weng}

The method  defines an optimization objective $\delta_2$:
\begin{align}
\delta_2 := \min_\vecp
\sum_{i=1}^k \sum_{j:\, j \not=i}(r_{ij}p_j - r_{ji}p_i)^2 \label{eq:wlw2}
\end{align}
From the definition it is immediately clear that the functional is nonnegative.  Moreover, it is zero on the image of map $\Theta: p(x) \mapsto {\bf R}(x)$ because there from \eqref{eq:iiabin} we have
\begin{align}
r_{ij}p_j = \dfrac{p_i p_j }{p_i + p_j} = r_{ji} p_i.
\end{align}
Optimizing $\delta_2$ is also numerically efficient. Since the functional is a quadratic function of $p_i$'s it is possible to reduce optimization to solving a set of linear equations.

\subsubsection{Bayes-covariant coupling}

We proposed this method in our work \cite{SB}. The underlying idea is geometric. Let us call the variety of pairwise likelihood matrices (i.e. the image of map $\Theta$) the Bradley-Terry manifold.  
The method starts by mapping the matrix of pairwise likelihoods coordinate-wise via
\begin{align}
\theta_{ij}: r_{ij} \mapsto \log \biggl(\dfrac{1}{r_{ij}} - 1\biggr). 
\label{eq:theta}
\end{align}
In the new coordinate space the Bradley-Terry manifold becomes a linear subspace, and the method is simply the orthogonal projection on the subspace. 

\subsubsection{Other pairwise coupling methods}

Let us briefly outline other coupling methods which have been proposed in the literature. 
In their comprehensive study of pairwise coupling  Hastie, Tibshirani \cite{HT} introduce a coupling method that optimizes a functional derived from Kullback-Leibler divergence. Work of Zahorian, Nossair \cite{ZahorianNossair} on classification of vowels using neural networks introduces another coupling method. Regular pairwise methods based on reverting map $\Theta$ in a columnwise manner have been proposed in \cite{PKPD} and \cite{SBT}. Also, Wu-Lin-Weng studied another coupling method based on a quadratic functional of posteriors \cite{WLW}.

\subsection{Numerical stability of coupling methods}
\label{sec:numerical-instability}
Some coupling methods have numerically unstable behavior near the boundary of the space of possible pairwise likelihood matrices. For instance, the Bayes covariant coupling suffers from this problem, since the mapping $\theta_{ij}$ in \eqref{eq:theta} is singular at the limit points zero and one.

In our work \cite{SB} we have proposed two ways to deal with such instability:
\begin{itemize}
\item start by choosing a small threshold $\onehalf \gg \tau > 0$ and then force individual pairwise likelihoods $r_{ij}$ to lie in the interval $[\tau, 1 - \tau]$ by replacing them with $\min(1-\tau, \max(\tau, r_{ij}))$ if necessary, or 
\item choose a small threshold $\onehalf \gg \rho > 0 $, and remove from consideration any class $c$ for which there is $c'$ such that $r_{cc'} < \rho$,  i.e. remove all rows and columns from the pairwise likelihood matrix that correspond to such classes. Then apply the coupling method to the possibly smaller matrix of pairwise likelihoods. Finally, convert the posterior probability distribution to the full set of classes, for instance by extending with zero.
\end{itemize}

\section{Methods}
\label{sec:methods}

The  models built by pairwise coupling need to be built from binary classifiers. In this section we describe the dataset used, the three classes of convolutional neural networks employed as binary classifiers (micro-models, mini-models and macro-models), as well as the baseline multi-class networks which we used for comparison. 

\subsection{Dataset}

We have used Fashion MNIST (FMNIST) dataset \cite{FashionMNIST}. It has been suggested \cite{FashionMNIST} to be a better starting point for examining of computer vision classification methods compared to historically more popular MNIST dataset\cite{MNIST}. Moreover, the small size of the dataset resulted in short training times and thus in low environmental impact of our experiments, an increasingly important societal consideration \cite{EnergyAI}. The overall training and test data sizes (60000 and 10000 respectively) are identical to MNIST.  There are 10 classes as shown in Table \ref{tab:FMNIST}, evenly distributed in the dataset.

\begin{table}
	\centering
\begin{tabular}{lc}
	\bf class & \bf description\\
	\hline
	0 & T-shirt/top \\
	1 & trouser \\
	2 & pullover \\
	3 & dress \\
	4 & coat \\
	5 & sandal \\
	6 & shirt \\
	7 & sneaker \\
	8 & bag \\
	9 & ankle boot 
\end{tabular}
\caption{Classes in Fashion MNIST dataset.}
\label{tab:FMNIST}
\end{table}

\subsection{Baseline networks}

To create our baseline model (model $F$ in Figure \ref{fig:arch}) the Keras example CNN original designed for MNIST dataset (model $M$ in Figure \ref{fig:arch}). The only structural difference is that we replaced dropout layers with batch normalization layers. There are two reasons for this adaptation:
\begin{itemize}
	\item The optimal probability of dropout may vary depending on the task. The values chosen for MNIST task may not be suitable for Fashion MNIST task, and in this work we did not plan to do any hyperparameter tuning.
	\item In the most recent survey on dropout methods \cite{DropoutSurvey} it is stated that dropout is not necessary for convolutional networks, and batch normalization is sufficient \cite{BatchNormalization}, \cite{Dropout1}.
\end{itemize} 

We trained altogether 32 instances $F_0, \ldots, F_{31}$ of networks based on $F$ architecture, whose weights were later used in initialization of the binary classifiers.

\begin{figure}
	\centering
	\includegraphics[width = 0.7\textwidth]{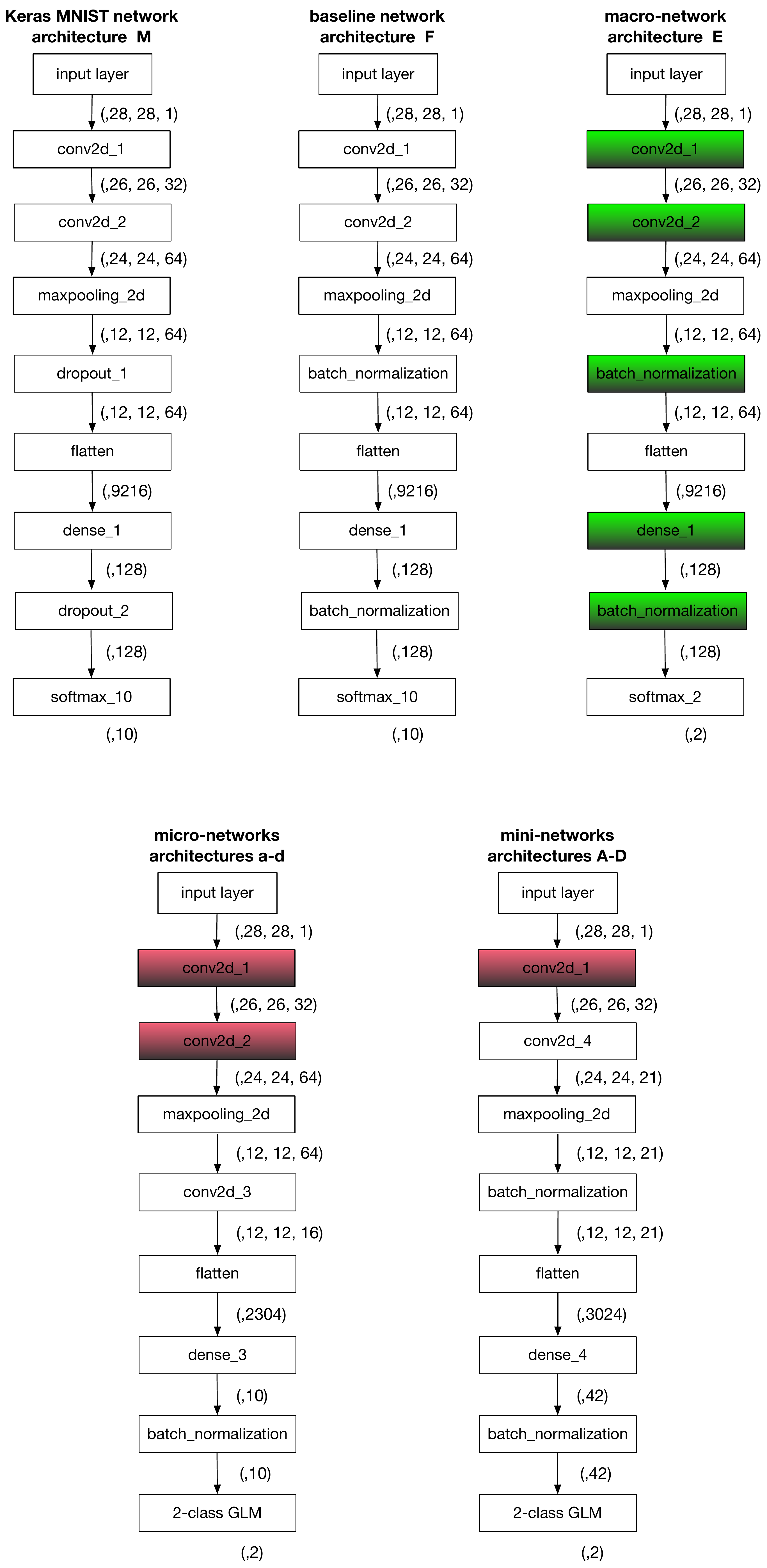}
	\caption{Schemas of feedforward CNN networks used for experiments, as well as the prototype network architecture from Keras library. Tensor shapes below each box indicate the layer's output shape. The red color indicates layers' weights was copied  from the baseline model with $F$ architecture and fixed throughout training. The green color indicates that the layers' weight was copied from the baseline model with $F$ architecture, but not fixed during training. The final two class general linear model (GLM) layer in architectures $a-d, A-D$ was either logistic regression or the GLM model with complementary log-log link function. }
	\label{fig:arch}
\end{figure}

\subsection{Architectures for binary models}

We have examined 9 different feed-forward  architectures for binary models, which we describe in this section. Roughly in increasing size they were micro-models (Section \ref{sec:micro}, see also architectures $a-d$ in Figure \ref{fig:arch}), mini-models (Section \ref{sec:mini}, architectures $A-D$ in Figure \ref{fig:arch}) and macro-models (Section \ref{sec:macro}, the architecture $E$ in Figure \ref{fig:arch}). For all of them we trained 32 different networks, each partially initialized with the weights of the corresponding baseline network $F_i$ ($i\leq 32$).

\subsubsection{Binary micro-models}

\label{sec:micro}

In Section \ref{sec:multi-class} we will compare models built using pairwise coupling methods with the baseline model. It is desirable to use models with the same number of training parameters. Since a complete pairwise-coupled model requires  45 binary classifiers, individual binary classifiers have to be rather small. Namely, since the model $G$ of the network has 1 200 650 parameters, individual binary classifiers should have $\approx 1200650 / 45 \approx 26681$ parameters. It is quite challenging to get good performance with a convolutional network that small. A natural first step is to reduce the number of neurons (convolutions) in individual layers. However,  based on our previous experience with share-none architectures \cite{SKT}, we felt that two additional alterations were needed:
\begin{itemize}
	\item Reducing the number of 2D maps by using $1\times 1$ convolution.
	\item Sharing weights among individual pairwise classifiers. Thus the first two layers of the binary classifiers had all identical weights copied from the corresponding baseline network. Those weights were not trainable.
\end{itemize}

The difference among the four variants of the micro-networks were as follows.

Networks $a$ and $b$ were trained with the softmax layers as the final layer. Networks $c$ and $d$ used the generalized linear model (GLM) model with complementary log-log link function, which is  considered more suitable in cases of non-symmetric distributions \cite{cloglog}. 

Networks $a$ and $c$ employed binary encoding of the dependent variable. Encoding of dependent variable in networks $b$ and $d$ was altered to $\epsilon$ instead of $0$ and $1-\epsilon$ instead of $1$ with $\epsilon = 1 / ( \textrm{size of the training set})$ in an attempt to:
\begin{itemize}
	\item mimic use of J. Platt's uninformative prior in SVM case, and 
	\item avoid numerical instabilities in pairwise-coupling, as explained in Section \ref{sec:numerical-instability}.
\end{itemize}

\subsubsection{Binary mini-models}

\label{sec:mini}

A complete model built using pairwise coupling from binary micro-models has the same number of parameters as the baseline model. However, it is much easier to train, since each pairwise training dataset has $c/2 = 5$ times less data in it compared to the full Fashion MNIST dataset. We can roughly estimate that the complexity of training one epoch of a network is proportional to the product of the number of weights and the number of training samples. Since each pairwise micro-model has  $~ c(c-1)/2$ less parameters, the complete arithmetical complexity to run the same number of epochs for all $c(c-1)/2$ networks is about $c/2 = 5$ times lower compared to the baseline model. 

We therefore investigated also pairwise coupling models, termed \emph{mini-models} that have approximately the same total arithmetical training complexity to the baseline model, although they have more parameters. The underlying binary mini-networks have $c/2$ times more parameters than micro-networks, or equivalently, $(c-1)$ times less parameters than the baseline networks.  Their architecture is shown in Figure \ref{fig:arch}.

Analogously to the case of micro-models we used 4 flavors of this architecture. The models $A,B$ used softmax as the final layer, whereas the models $C,D$ used complementary log-log GLM model. The models $A,C$ used standard binary encoding of the class variable, whereas the models $B,D$ used the encoding as for the models $b,d$.

\subsubsection{Binary macro-models}

\label{sec:macro}

We also included models of type $E$, shown in Figure \ref{fig:arch}, whose architecture is identical to the baseline model $F$ except for the final layer, which is not 10-class softmax layer, but just 2-class softmax layer. The models are initialized with the weights of the corresponding $F$ model, but all weights are left trainable. 



\subsection{Other  details}
Multi-class networks were trained using batch size 128, whereas for  binary models we decreased the batch size to 32. The number of epochs was 12 in all cases. 

In all cases we used AdaDelta stochastic gradient optimizer with standard settings and cross-entropy as the optimization criterion.

Finally, Table \ref{tab:arch-sizes} shows the parameter counts for all networks used.

\begin{table}
	\centering
	\begin{tabular}{L{1.5cm}C{2cm}R{2.3cm}R{2.3cm}R{2.3cm}}
		Architecture & Number of prediction classes & Total parameters & Trainable parameters & Non-trainable parameters\\
		\hline
		a,b & 2 & 42,968 & 24,132 & 18,836 \\
		c,d & 2 & 42,957 & 24,121 & 18,836 \\
		A,B & 2 &  133,777 & 133,331 & 446 \\
		C,D & 2 & 133,734 & 133,288 & 446 \\
		E & 2 & 1,199,618 & 1,199,234 & 384 \\
		\hline
		F & 10 & 1,200,650 & 1,200,266 & 384 \\		
	\end{tabular}
	\caption{Sizes of network architectures}
	\label{tab:arch-sizes}
\end{table}

\section{Evaluation of pairwise accuracy}

In Section 1 we introduced the concept of pairwise explicability. This concept refers to the situation, when we are confident that only two possible predictions are possible and we desire the best possible prediction by a convolutional neural network. Of course  one may obtain a two-class prediction by taking IIA restriction of a multi-class classifier. But is it possible to do better by training specialized binary networks on only two classes?

In this section we present the answers to this question for the binary architectures described in Section \ref{sec:methods}.


\subsection{Influence of architecture}

\begin{figure}
	\includegraphics[width=0.95\textwidth]{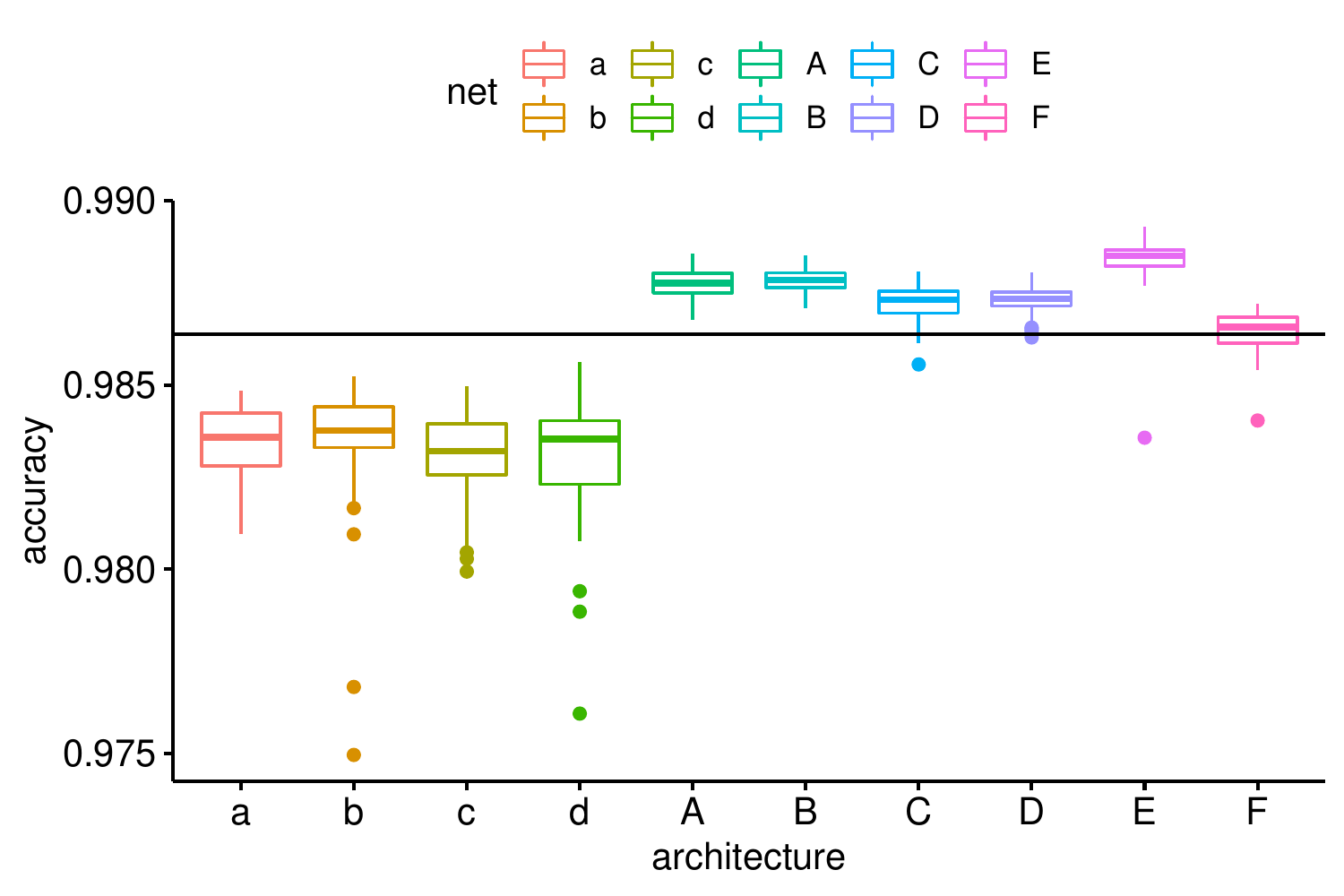}
	\caption{Average pairwise accuracy for architectures $a-d$, $A-F$ over 32 trials. The horizontal line indicates the mean pairwise accuracy for the architecture $F$.}
	\label{fig:anova1}
\end{figure}

In Figure \ref{fig:anova1} we have plotted boxplots for average pairwise performance over 32 trainings of each architecture.

From the figure we see the obvious trend that with increasing size, the convolutional networks perform better. The key point is that all mini-architectures $A-D$ (as well as the macro-architecture $E$) perform better than ()the IIA restrictions of) the standard multi-class network (architecture $F$). On the other hand all micro-networks of types $a-d$ perform worse than the multi-class network.

%
%
%

\subsection{Detailed performance by pairs of classes}

We also plotted the pairwise accuracy for each pair of classes in Figure \ref{fig:pair-detail}. From the figure it is clear that only a handful of pairs of classes show uneven performance among architectures:
\begin{itemize}
	\item coat/shirt
	\item dress/coat
	\item dress/shirt
	\item pullover/coat
	\item pullover/shirt
	\item t-shirt/shirt
\end{itemize}
Again, as observed above, it is the size of the architecture that is the primary factor affecting the pairwise accuracy. Moreover, the results strongly suggest that not all decision boundaries are equally difficult to find, and thus in multi-class pairwise-coupled models varying learning capacity (i.e. the number of trainable parameters) is likely needed to construct an optimal two-class classifier.

\begin{figure}
	\includegraphics[width=\textwidth]{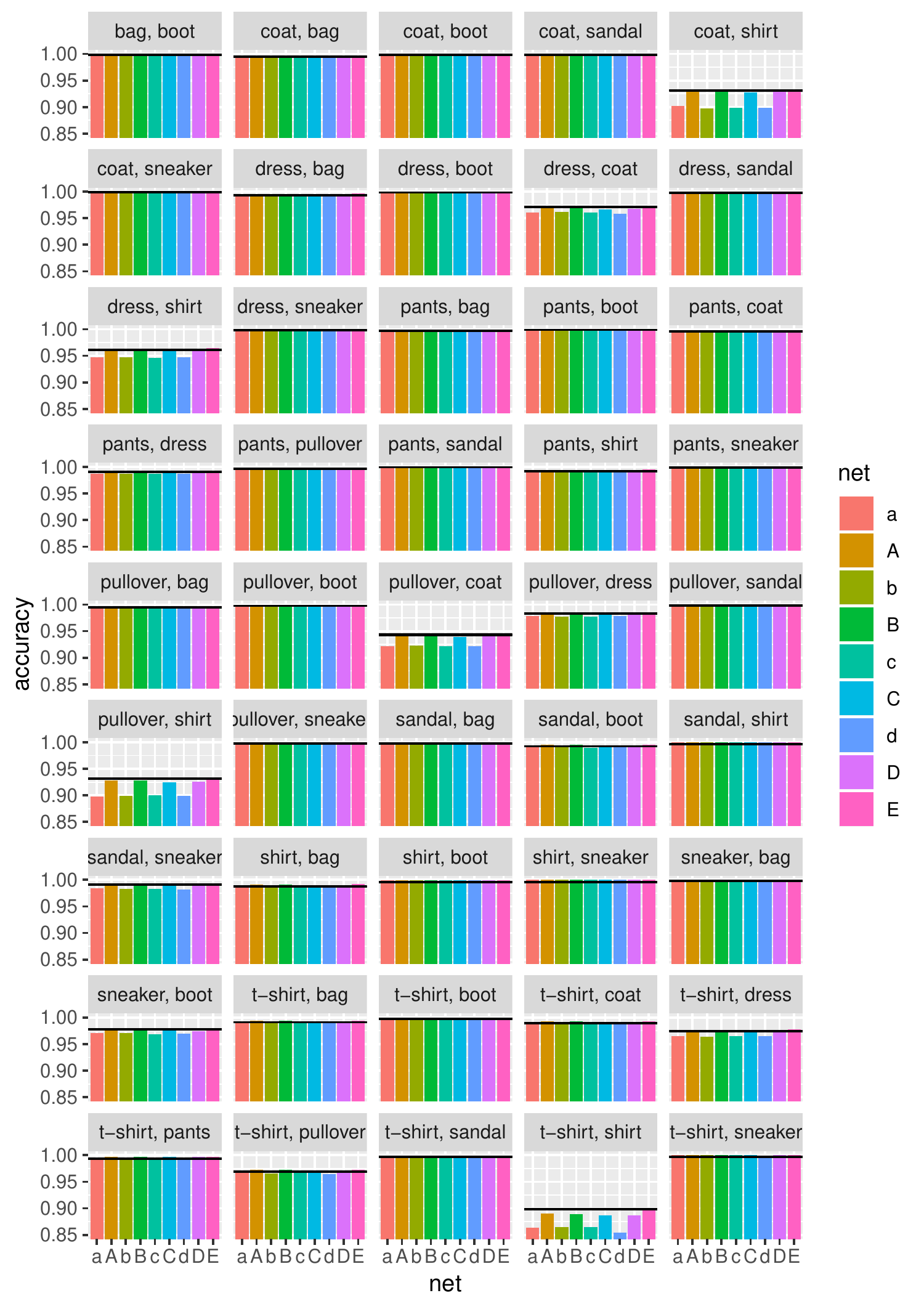}
	\caption{Average pairwise accuracy of individual architectures for all possible pairs of classes. The black horizontal line indicates mean accuracy of architecture $F$.} 
	\label{fig:pair-detail}
\end{figure}


\section{Multi-class accuracy of models built by pairwise coupling}
\label{sec:multi-class}

In the previous section we have concentrated on the question whether pairwise accuracy can be improved by training network with two classes only. We found that for mini-networks the answer is yes. In this section we investigate whether this improvement translates to better multi-class accuracy in pairwise coupling models. There is even the possibility that despite the worse pairwise performance of micro-networks compared with the baseline models, using pairwise coupling may repair their pairwise mispredictions and the multi-class models built from micro-networks using pairwise coupling could be more accurate that the baseline models. This effect, which we term \emph{coupling recovery} was examined on synthetic data sets in works \cite{HT} and \cite{WLW}.

\subsection{Preliminaries}
In order to build the multiclass models from pairwise ones we need to take into account a couple of additional factors. The first is that in addition to architecture, one needs to choose a coupling method. As indicated in Section \ref{sec:pairwise-coupling-methods} we are going to use two coupling methods, the canonical method of Wu-Ling-Weng and the Bayes covariant method. Additionally, the Bayes covariant method is not numerically stable, so that one needs to choose a way to avoid numerical singularities. We will use the first method outlined in \ref{sec:numerical-instability}. In order to do that, one needs to choose a threshold $ \onehalf \gg \tau > 0$, and we opted for the value $\tau = 10^{-3}$. We illustrate suitability of this value by plotting the accuracy of predictions for four different pairwise-coupling models based on the $A$ network architecture, as shown in Figure \ref{fig:threshold}.

\begin{figure}
	\centering
	\includegraphics[width=\textwidth]{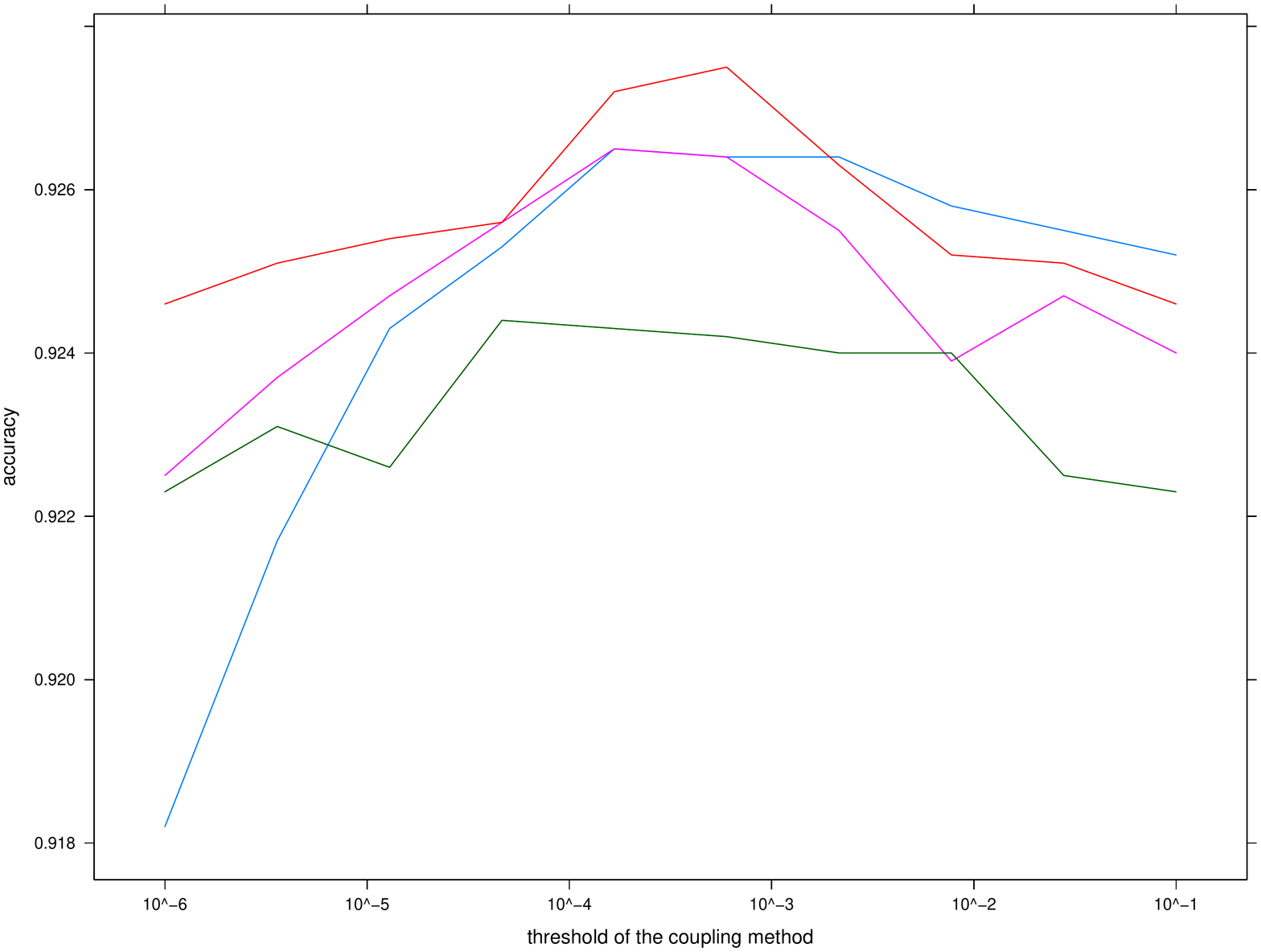}
	\caption{The accuracies of four different pairwise coupling models built from $A$ architecture and using the Bayes covariant coupling method with the given threshold $\tau$.}
	\label{fig:threshold}
\end{figure}

\subsection{Statistical analysis}

Let us start by calculating the average multi-class accuracies for each pair of architecture and the coupling method, as shown in Table \ref{tab:multi-class}.

\begin{table}
\begin{tabular}{ccc}
	network & Wu-Lin-Weng method & Bayes covariant method \\
	\hline
	a & 0.902 & 0.907 \\
	b & 0.904 & 0.909 \\
	c & 0.903 & 0.905 \\
	d & 0.901 & 0.903 \\
	\hline
	A & 0.923 & 0.926 \\
	B & 0.924 & 0.926 \\
	C & 0.921 & 0.923 \\
	D & 0.922 & 0.924 \\
	\hline
	E & 0.928 & 0.929 
\end{tabular}
\caption{Average multi-class accuracies of the models built from pairwise classifiers.}
\label{tab:multi-class}
\end{table}

The mean multi-class accuracy of the baseline models is 0.923. which shows that only some mini-models may outperform the baseline. We evaluated paired $t$-tests and obtained the results shown in Table \ref{tab:ttest1}.

\begin{table}
		\begin{tabular}{cccc}
			pair & coupling method & $p$-value &  significance\\
			\hline
			A - F & Wu-Lin-Weng & 0.8839\\
			B - F & Wu-Lin-Weng  & 0.4295 \\
			A - F &  Bayes covariant method & $0.0001$ & *** \\
			B - F & Bayes covariant method  & $9.2 \times 10^{-5}$& ***\\
			C - F & Bayes covariant method & $0.3767 $\\
			D - F & Bayes covariant method & $0.7246$ \\
		\end{tabular}
	\caption{Evaluation of $t$-tests comparing performance of a mini-model with the baseline multi-class model.}
	\label{tab:ttest1}
\end{table}

We conclude that  mini-models $A$ and $B$ outperform in multi-class setting the baseline networks when the Bayes covariant pairwise coupling method is used.

\section{Incremental improvement}

In this section we investigate the possibility of correcting a poorly performing multi-class convolutional network. A standard way to understand the poor performance of a multi-class classifier is to construct the confusion matrix of predictions. For example we trained a baseline classifier $F_0$ which had confusion matrix as shown in Table \ref{tab:conf1}.

\begin{table}
\begin{tabular}{r|rrrrrrrrrr}
	& 0 & 1 & 2 & 3 & 4 & 5 & 6 & 7 & 8 & 9 \\
	\hline
	0 & 881 & 2 & 12 & 9 &  6 & 2 & 86 & 0 & 2 & 0 \\
	1 & 3 & 987 & 1 & 4 & 1 & 0  & 3 & 0 & 1 & 0 \\
	2 & 24 & 2 & 872 & 6 & 45 & 1 & 49 & 0 & 1 & 0 \\
	3 & 22 & 6 & 8 & 896 & 21 & 0 & 46 & 0 & 0 & 1 \\
	4 & 3 & 0 & 30 & 22 & 886 & 0 & 58 & 0 & 1 & 0 \\
	5 & 0 & 0 & 0  & 0  & 0   & 987 & 0 & 8 & 0  & 5 \\
	6 & 91 & 0 & 43 & 21 & 58 & 0 & 781 & 0 & 6 & 0 \\
	7 & 0 & 0 & 0 & 0 & 0 & 10 & 0 & 977 & 0 & 13 \\
	8 & 3 & 0 & 1 & 5 & 1 & 2 & 4 & 3 & 980 & 1 \\
	9 & 0 & 0 & 0 & 0 & 1 & 6 & 0 & 37 & 1 & 955\\
	\hline
	\end{tabular}
\caption{The confusion matrix of the baseline classifier $F_0$ of type $F$.}
\label{tab:conf1}
\end{table}

From the table it is  clear that the network makes most errors confusing classes 0 and 6. We may try improving on classification of an image $x$ by applying a pairwise coupling method to the  pairwise likelihood matrix $R'(x) = (r'_{ij}(x)) $ constructed as follows. Let $\bigl(p_0(x), \ldots, p_9(x)\bigr)$ be the prediction of $F_0$ and let $\bigl(q_0(x), q_6(x)\bigr)$ be the prediction of a binary classifier $Q$ trained to distinguish classes 0 and 6. Then we replace  two entries of the pairwise likelihood matrix for $F_0$ by values provided by $Q$:
\begin{align}
r'_{ij}(x) = \begin{cases}
q_0(x)\qquad \textrm{if $i = 0$ and $j = 6$} \\
q_6(x)\qquad \textrm{if $i = 6$ and $j = 0$} \\
0\quad\qquad\,\,\,\,\, \textrm{if $i = j$}\\
\dfrac{p_i(x)}{p_i(x) + p_j(x)} \quad\textrm{otherwise}
\end{cases}
\label{eq:partial-correction}
\end{align}

We have illustrated the results of such partial correction on Figure \ref{fig:fixes-lm}.

\begin{figure}[!ht]
	\centering
	\includegraphics[width=0.7\textwidth]{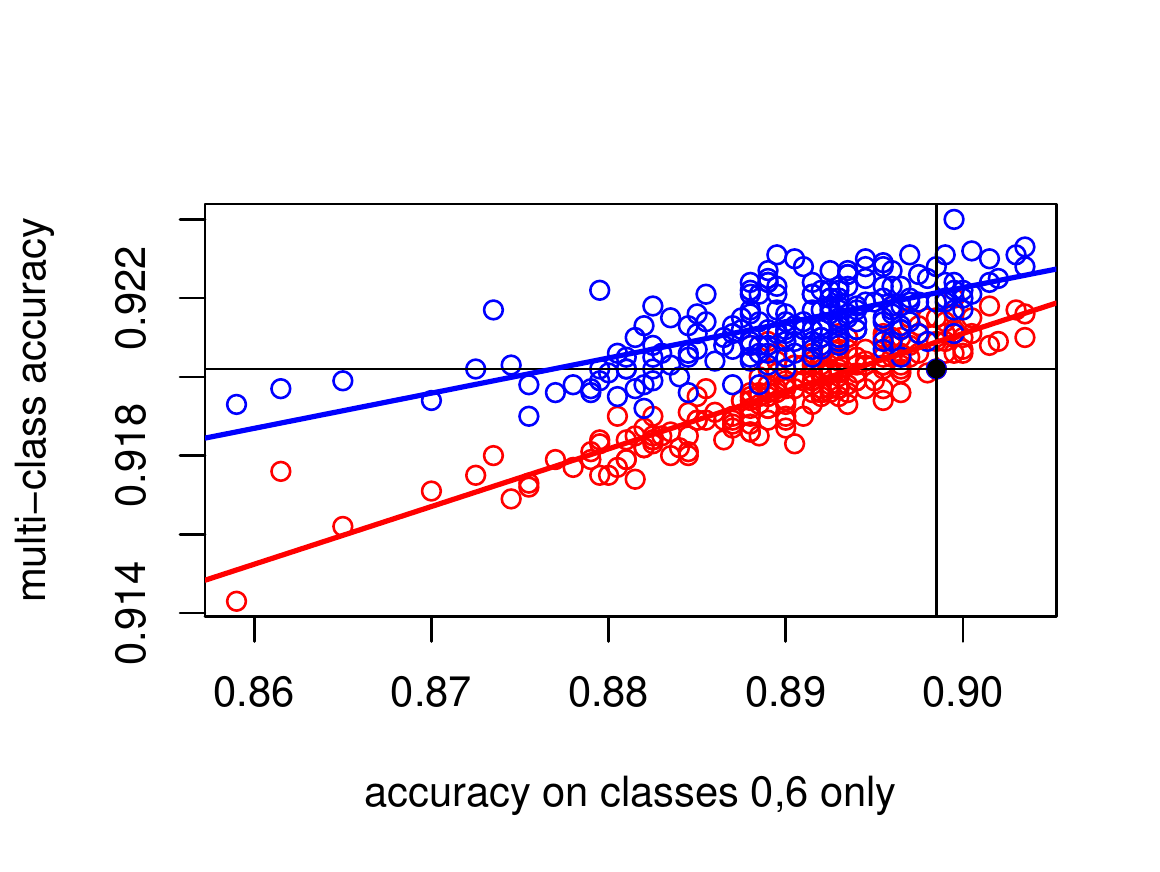}
	\caption{Plot of the multi-class versus the pairwise accuracy of 200 instances of partial corrections given by \eqref{eq:partial-correction}. We used mini-networks of type $A$ as the binary classifiers. Red color corresponds to results for pairwise coupling with the Wu-Lin-Weng method, whereas the blue color is for the Bayes covariant pairwise coupling. The black dot corresponds to the datum for $F_0$ model. Linear regression lines were added.}
	\label{fig:fixes-lm}
\end{figure}

We can conclude that even when  $Q$ has inferior pairwise accuracy to $F_0$, the coupling methods are able to increase the multi-class accuracy. The  Bayes covariant method (red regression line in Figure \ref{fig:fixes-lm}) is slighly better than the method of Wu-Lin-Weng (blue regression line) over the range of pairwise accuracies afforded by mini-networks of type $A$. However, the latter is more efficient in converting an increase of the pairwise accuracy to an increase of the multiclass accuracy. In fact, linear regression model for the method of Wu-Lin-Weng has $R^2 = 0.81$, whereas $R^2 = 0.46$ for the Bayes covariant method.

\section{Likelihood randomness explicability}

\label{sec:randomness-explicability}

Partial correction achieved in the previous section was enabled by inherent modularity of the pairwise coupling models. In this section we examine another application of this modularity.

Neural networks are inherently random algorithms. Their predictions will vary if they are repeatedly trained. This fact is obvious to specialists, but rarely conveyed to the end user. The primary obstacle is training cost. If it takes weeks to train a single deep neural network, it is utterly impractical to train 100 different  networks.  
We will illustrate on an example from Fashion MNIST that this obstacle is easily overcome in pairwise coupling  models. 


The image \#142 (counted from 1) in the test set is class 0 (t-shirt/top). It is shown in Figure \ref{fig:142}.

\begin{figure}[!ht]
	\centering
	\includegraphics[width=0.3\textwidth]{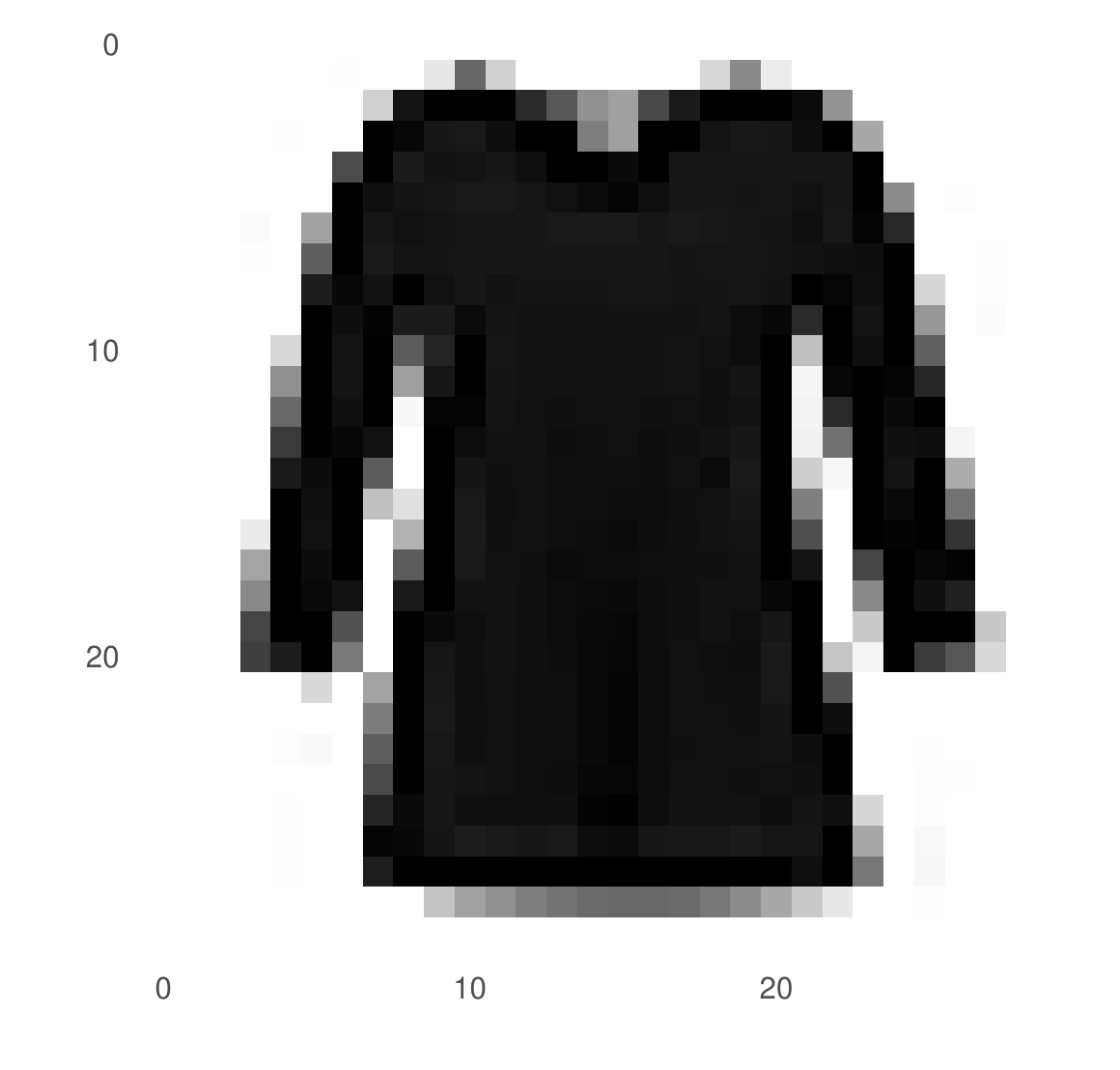}
	\caption{Image \#142 from the Fashion MNIST test data set.}
	\label{fig:142}
\end{figure}

This image is correctly predicted to be the class 0 by the network $A_1$, but incorrectly predicted to be the class 6 (shirt) by the network $A_2$. Both of them are quite sure, giving more than 95\% likelihood to their prediction as seen in Table \ref{tab:net2}.

\begin{table}[ht]
	\centering
	\begin{tabular}{rrrrrr}
		\hline
		network	& class 0 & class 1 & class 2 & class 3 & class 4  \\ 
		\hline
		$A_1$ & 9.53e-01 & 2.04e-08 & 1.13e-02 & 1.55e-05 & 1.53e-08 \\
		$A_2$ & 5.58e-08 & 3.59e-02 & 9.41e-06 & 5.06e-10 & 4.80e-11 \\ 
		\hline
		network & class 5 & class 6 & class 7 & class 8 & class 9 \\
		\hline
		$A_1$ & 1.63e-02 & 3.16e-11 & 4.69e-04 & 9.91e-04 & 4.83e-08 \\
		$A_2$ & 1.63e-09 & 9.82e-01 & 6.88e-07 & 2.32e-09 & 6.27e-09 \\ 
		\hline
	\end{tabular}
	\caption{Predicted multi-class likelihoods  for test image \# 142 by the networks $A_1$ and $A_2$}.
	\label{tab:net2}
\end{table}

Pairwise coupling approach allows one to create a vast number of new classifiers (more precisely $2^{45}$) by bootstrapping pairwise predictions from either of the two pairwise likelihood matrices corresponding to predictions of networks $A_1$ and $A_2$ and then applying a pairwise coupling method. We have plotted 100 samples for both pairwise coupling methods in Figure \ref{fig:100}. 

\begin{figure}
	\includegraphics[width=0.45\textwidth]{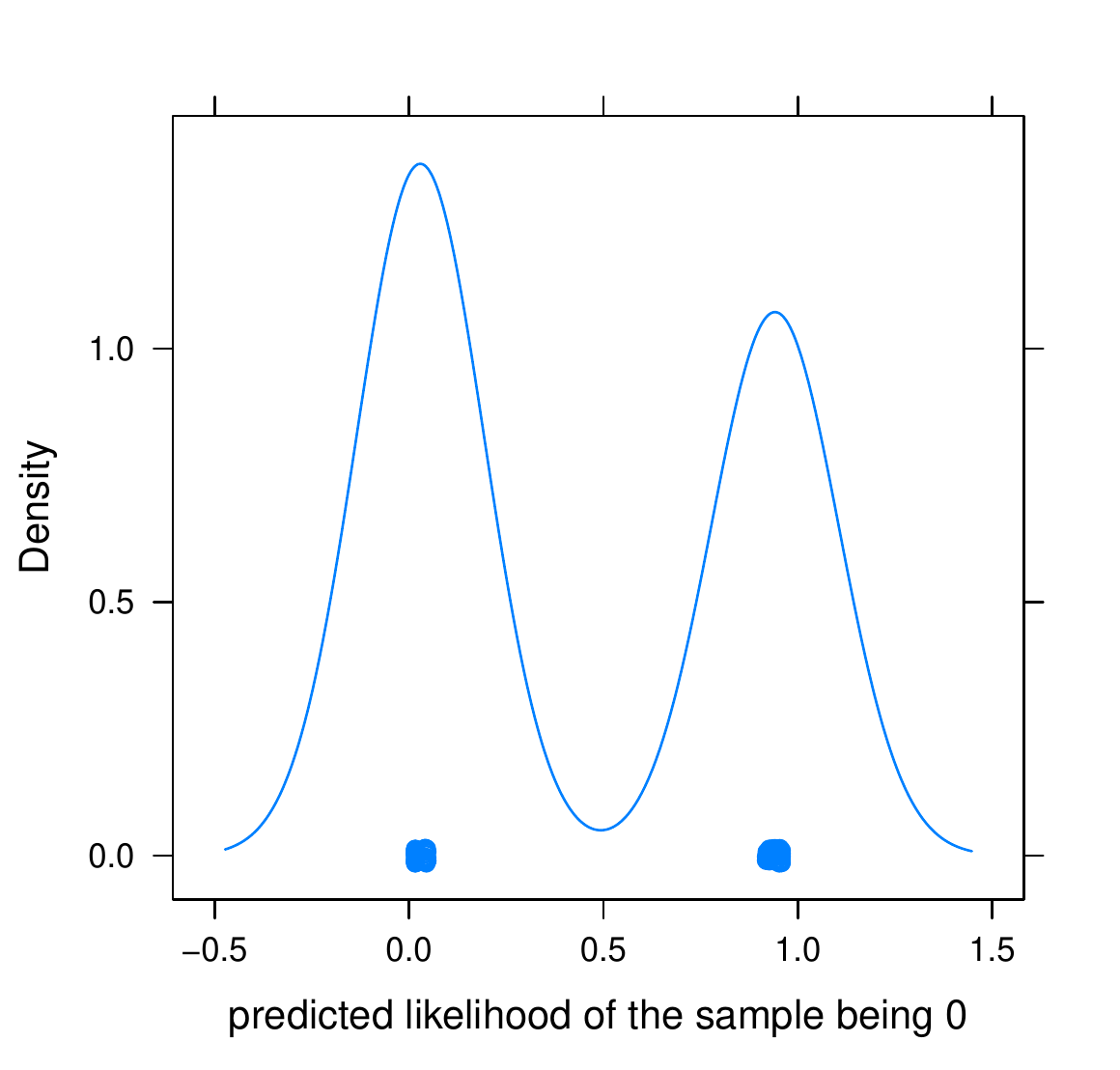}
	\includegraphics[width=0.45\textwidth]{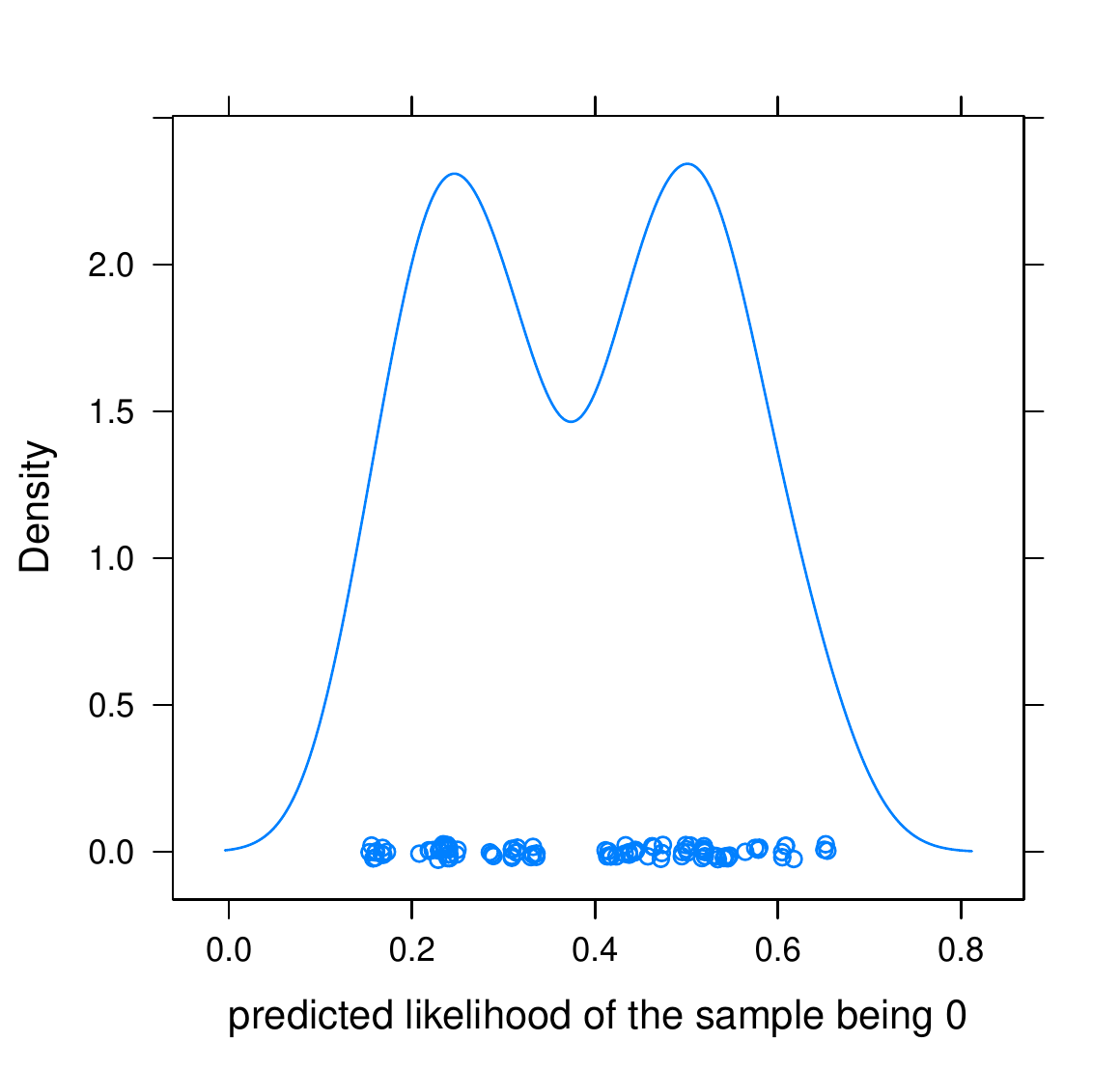}
	\caption{Predicted likelihoods of image \#142 for 100 randomly recombined pairwise-classifiers from networks $A_1$ and $A_2$. Left image: using the Wu-Lin-Weng method, right image: using the Bayes covariant method.}
	\label{fig:100}
\end{figure}

The plots show that there is a significant difference between the method of Wu-Lin-Weng and the Bayes covariant method. The former is strongly clustered in two clusters near 0 and 1, whereas the latter is much more uniformly distributed from 0.1 to 0.9. The natural guess that the binary classifier for the pair 0/6 has the crucial influence on the multi-class posteriors is confirmed by plotting conditional density plots in Figure \ref{fig:wu23}.

\begin{figure}[ht]
	\includegraphics[width=0.45\textwidth]{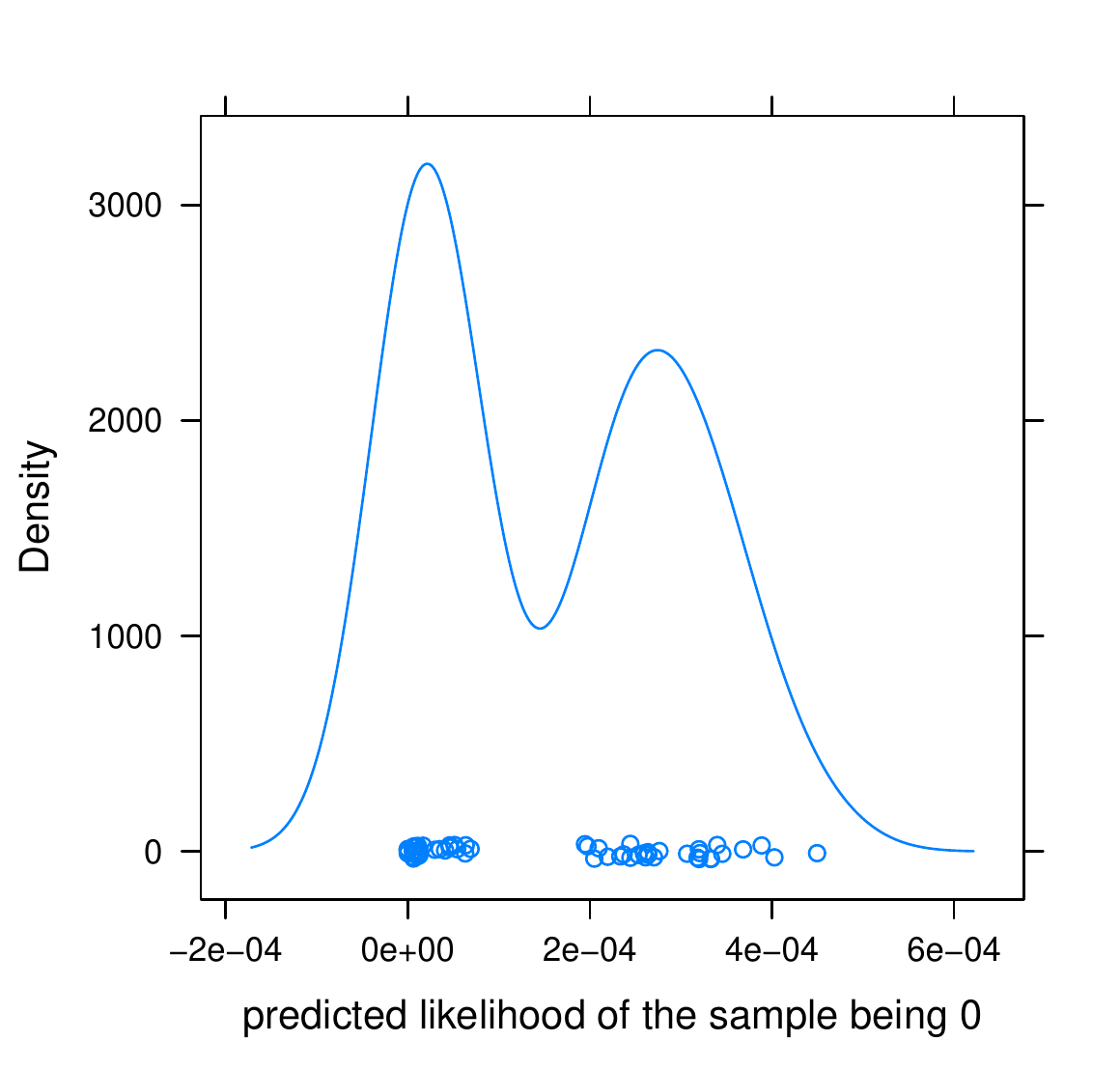}
	\includegraphics[width=0.45\textwidth]{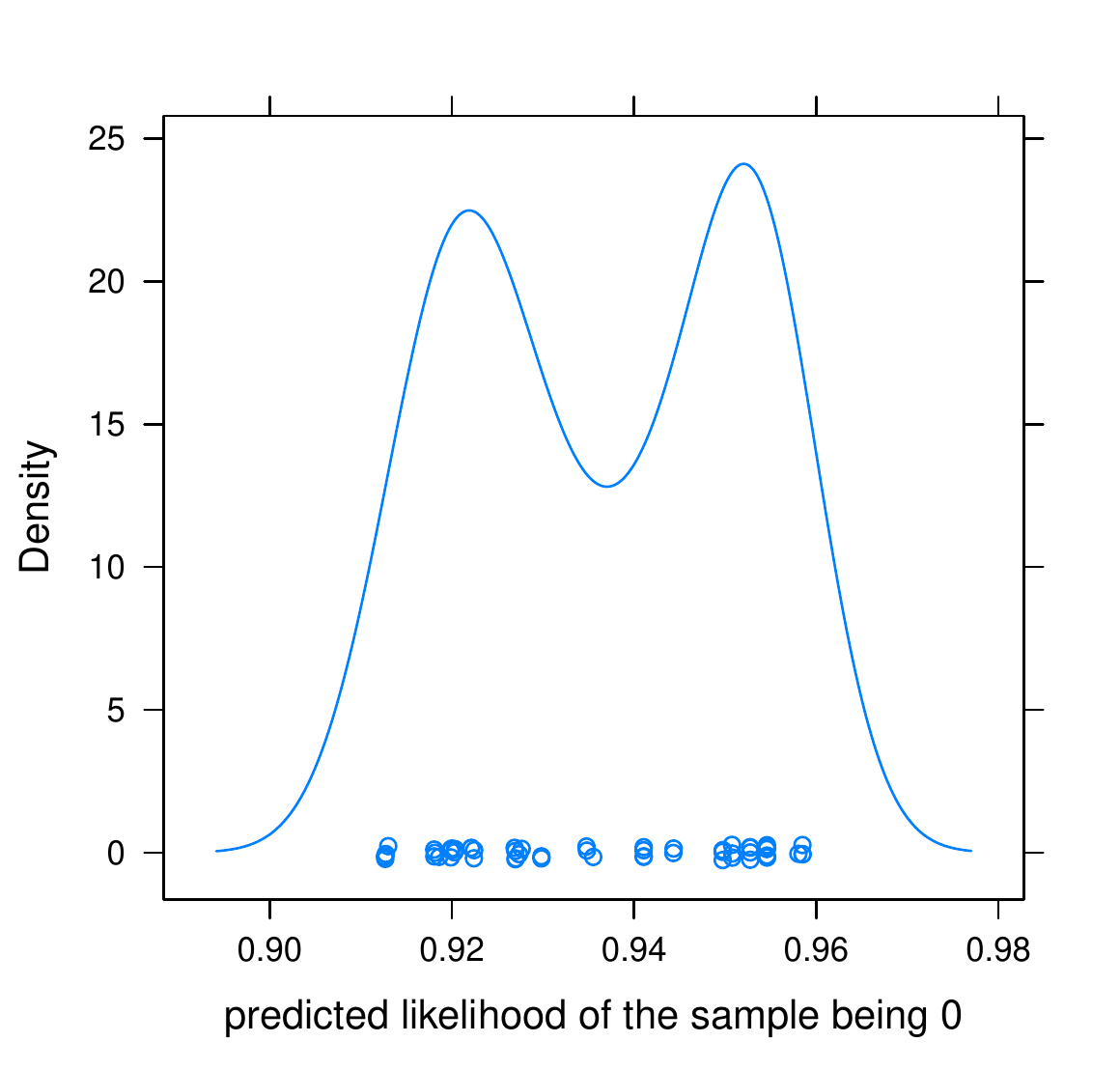}
	\caption{Predicted likelihoods of image \#142 for 100 randomly recombined pairwise-classifiers from networks $A_1$ and $A_2$ using the method of Wu-Lin-Weng. Left image: conditioned on the 0/6 prediction being made by the IIA restriction of the network $A_1$, right image: conditioned on the 0/6 prediction being made by the IIA restriction of the network $A_2$.}
	\label{fig:wu23}
\end{figure}

Thus the method of Wu-Lin-Weng is very sensitive to the information provided by the critical binary classifier, whereas the Bayes covariant method tries to balance information from all binary classifiers. Both have their advantages. The former is more \emph{post-hoc explicable}, since a misprediction is likely caused by a single classifier. On the other hand  the latter is likely to be more precise, since it integrates information from many models.

\section{Sureness explicability}

In this section we examine the problem when a classifier should abstain from making a prediction. In supervised learning this problem is often addressed as follows. Artificial class "other" with samples of images whose prediction should be avoided in original problem is introduced.  Instead of abstaining from making prediction, network is trained to predict this artificial class.
 There is another way to do that in models built with pairwise coupling, and we start by explaining its  theoretical underpinning.

Pairwise coupled models can be viewed as redundant, since any column of the matrix of pairwise likelihoods in theory allows one to infer multi-class posteriors (see Lemma \ref{lemma:column}). In practice however, the derived multi-class likelihoods would not be the same. Geometrically, in ideal situation, when all columns yield the same multi-class prediction, the matrix lies in the image of $\Theta$, the Bradley-Terry manifold. 

Therefore, if a pairwise-coupling classifier system is presented an example which belongs to no known class, we may expect that the resulting matrix of pairwise likelihoods would be much further from the Bradley-Terry manifold compared to the images on which it was trained. 

There is no unique way to measure the distance, but we can easily construct candidate measures. For the method of Wu-Lin-Weng we may take the metric given by optimization criterion $\delta_2$ (see \eqref{eq:wlw2}). For the Bayes covariant method we may take the distance of the reparametrization of the pairwise likelihood matrix given by maps $\theta_{ij}$ (see \eqref{eq:theta}) from its projection onto the Bradley-Terry linear subspace.

We evaluate these metrics on the case, when our $A$-type model is presented with images from a different dataset (digits from MNIST dataset) that was not used during the training. The results for the Bayes covariant coupling method are shown in Figure \ref{fig:sure:bc} and show very good separation of not-to-be classified images from MNIST database. 

\begin{figure}[!ht]
	\centering
	\includegraphics[width = 0.7\textwidth]{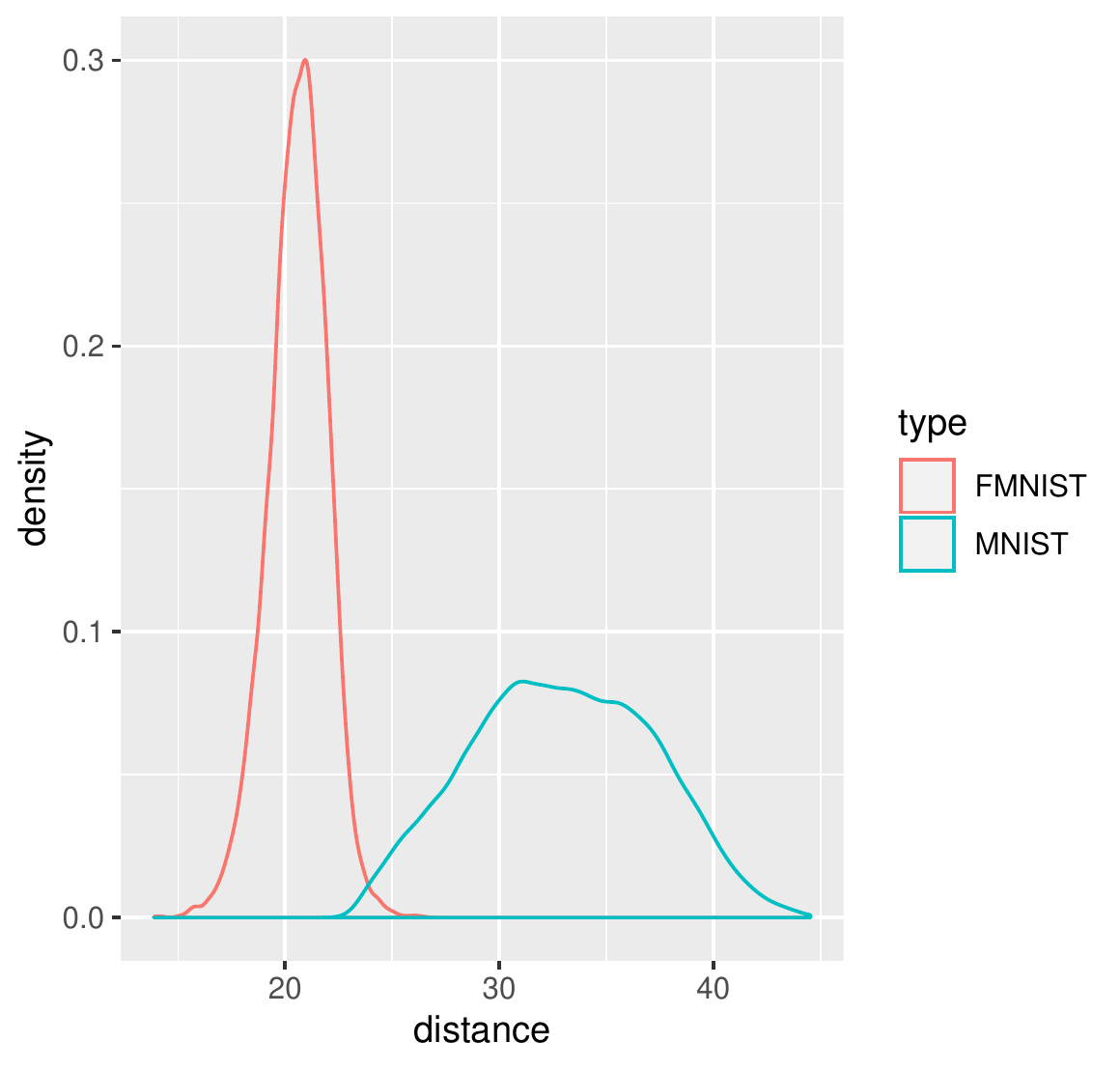}
	\caption{Distribution of distances of pairwise matrices for each testing dataset from the Bradley-Terry manifold for the Bayes-covariant method }
	\label{fig:sure:bc}	
\end{figure}

The results for the method of Wu-Lin-Weng are not as clear-cut. In fact, we had to resort to a diffent kind of visualization in order to analyze them as shown in Figure \ref{fig:sure:wu}. The left plot shows that the distance for Fashion MNIST dataset is close to zero most of the time, as to be expected, but  there are many points which fall in the range typical for MNIST dataset. The right plot reveals the problem - the distances for images from MNIST dataset, which should be large, are rather clustered in two areas - large, and low. The resulting overlap with the distribution of distances for Fashion MNIST may pose a problem in distinguishing among images from Fashion MNIST and those that should be classified as "based on how I've been trained, I don't know".

 \begin{figure}[!ht]
 	\centering
 	\includegraphics[width = 0.4\textwidth]{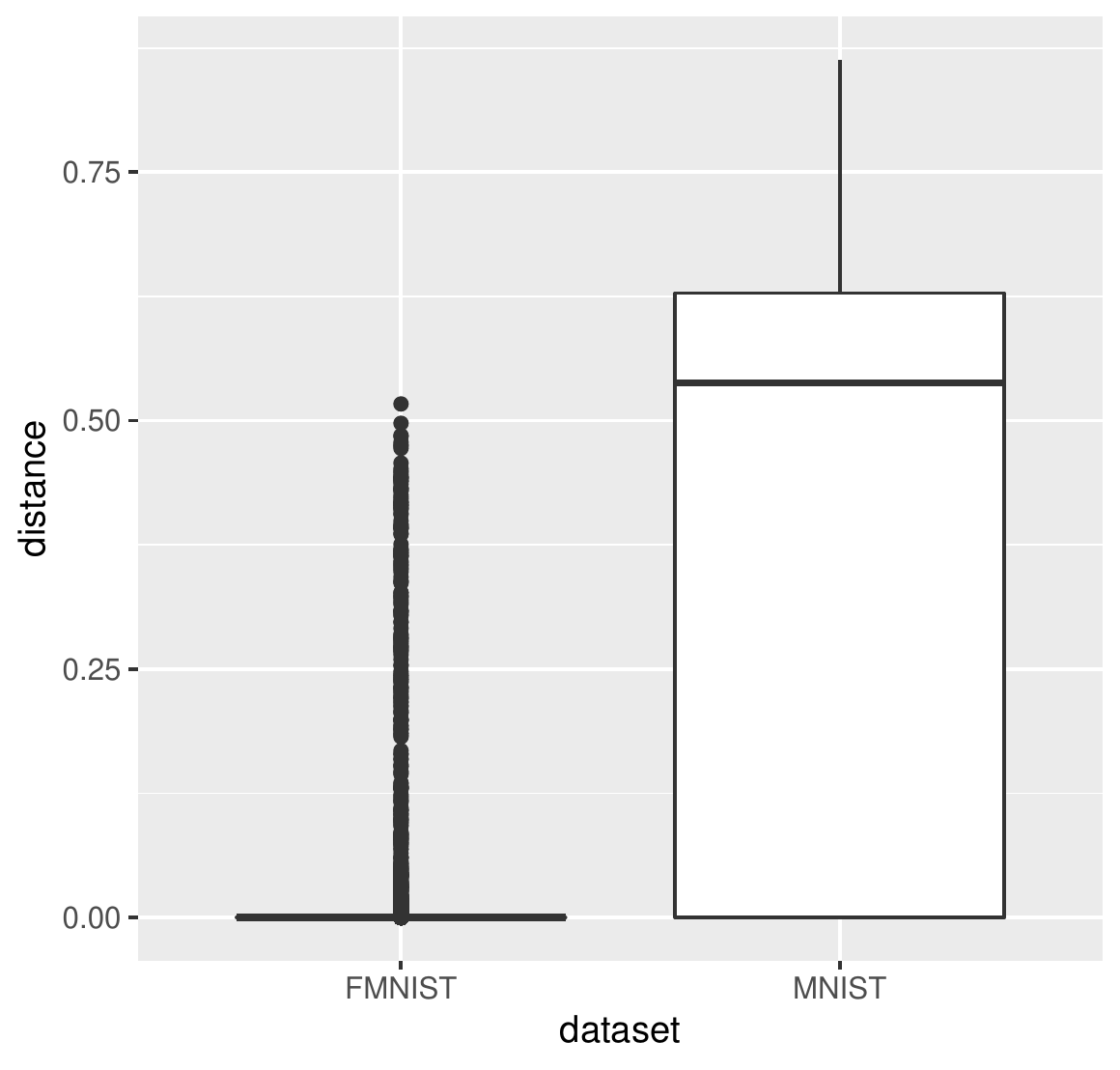}
 	\includegraphics[width = 0.4\textwidth]{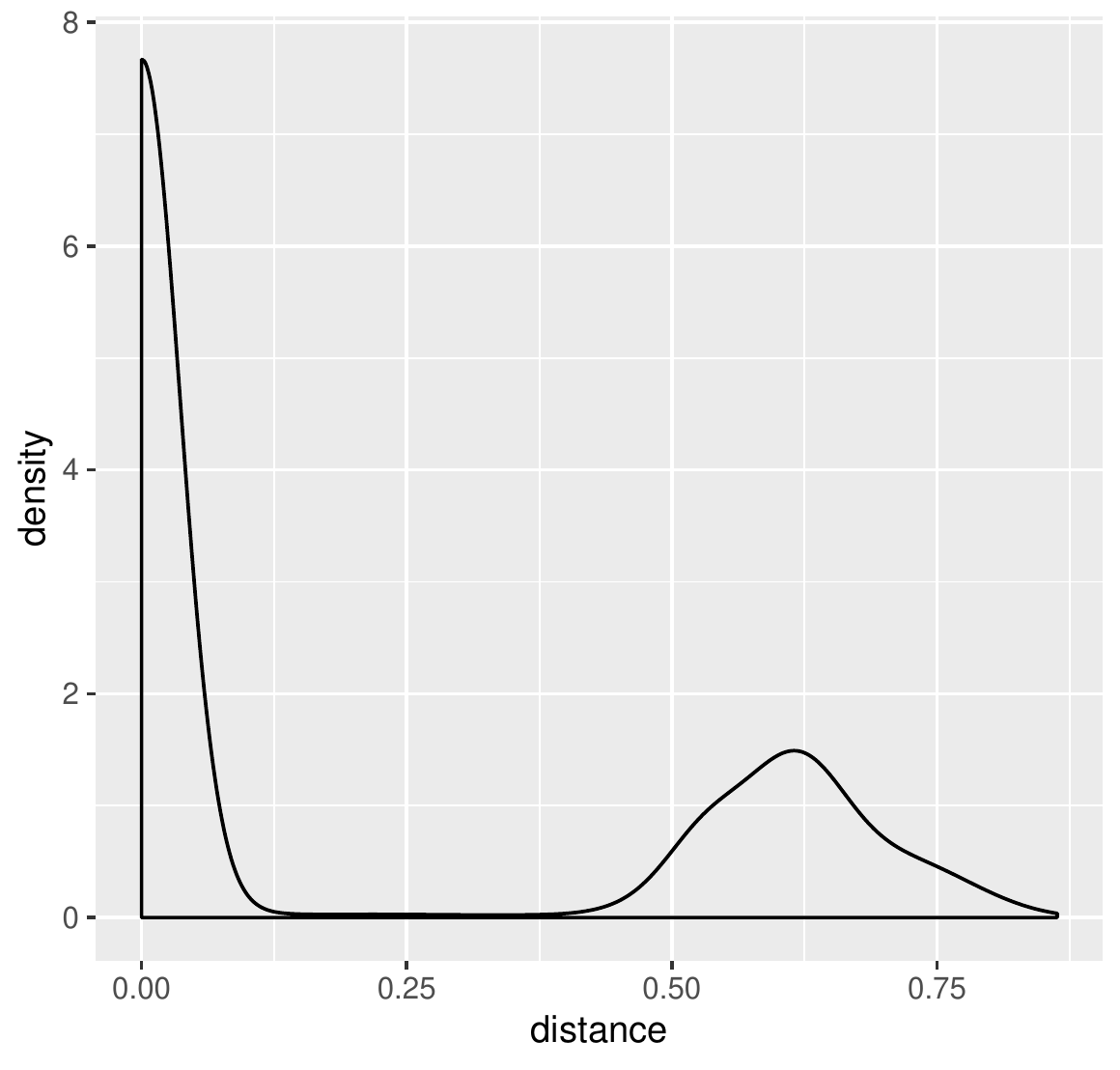}
 	\caption{Results for the Wu-Lin-Weng method. Left: boxplots of distances of pairwise matrices for each testing dataset from the Bradley-Terry manifold. Right: density plot of distances of pairwise matrices for the MNIST dataset from the Bradley-Terry manifold.}
 	\label{fig:sure:wu}	
 \end{figure}

In order to highlight this difficulty, we trained decision trees to classify MNIST from Fashion MNIST based only on the distance using \emph{rpart} package from R. The resulting trees are shown in Figure \ref{fig:trees}, which shows that the decision tree for the Wu-Lin-Weng method is more complex and not natural.

\begin{figure}
	\includegraphics[width = 0.4\textwidth]{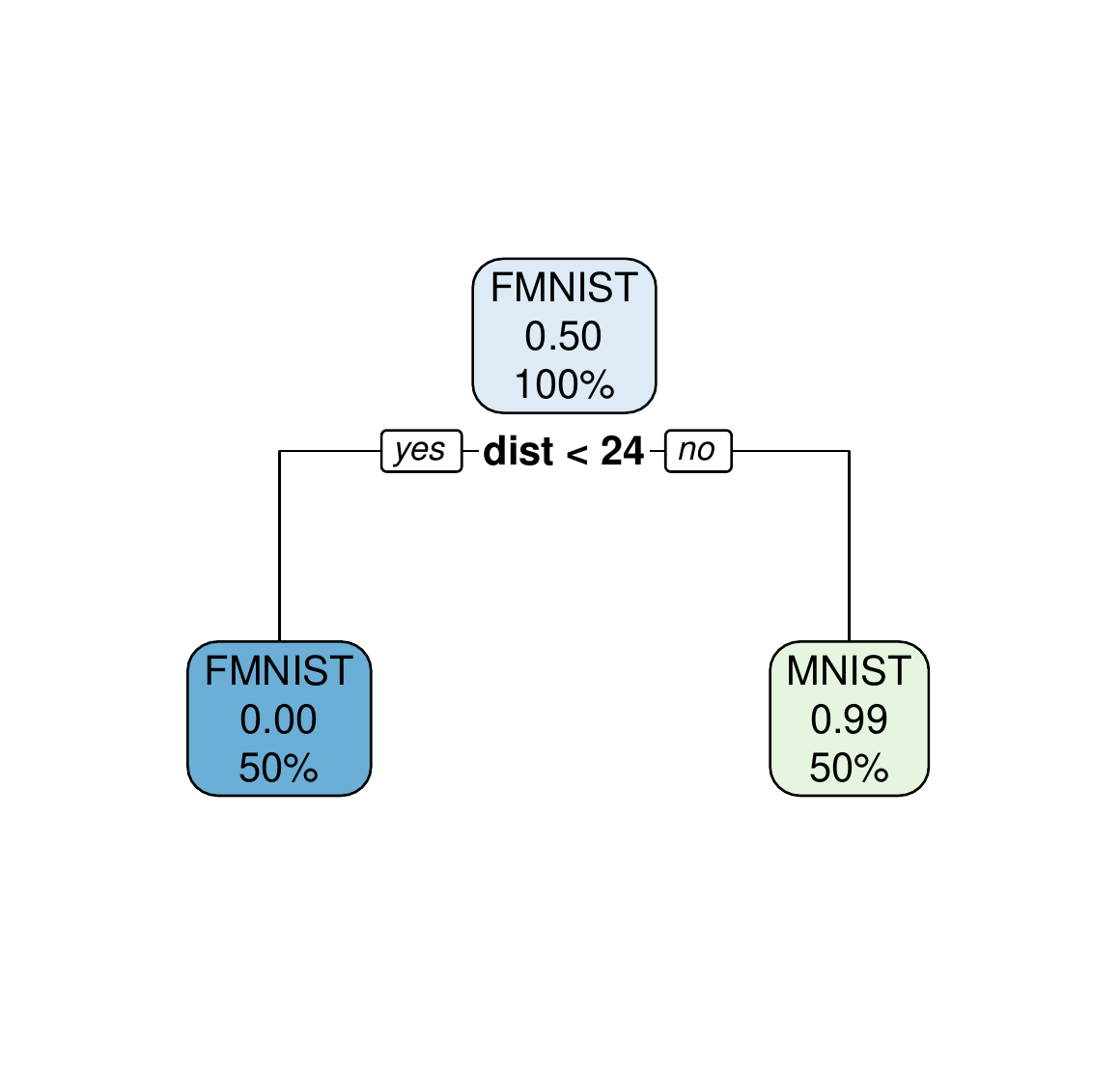}
	\includegraphics[width = 0.4\textwidth]{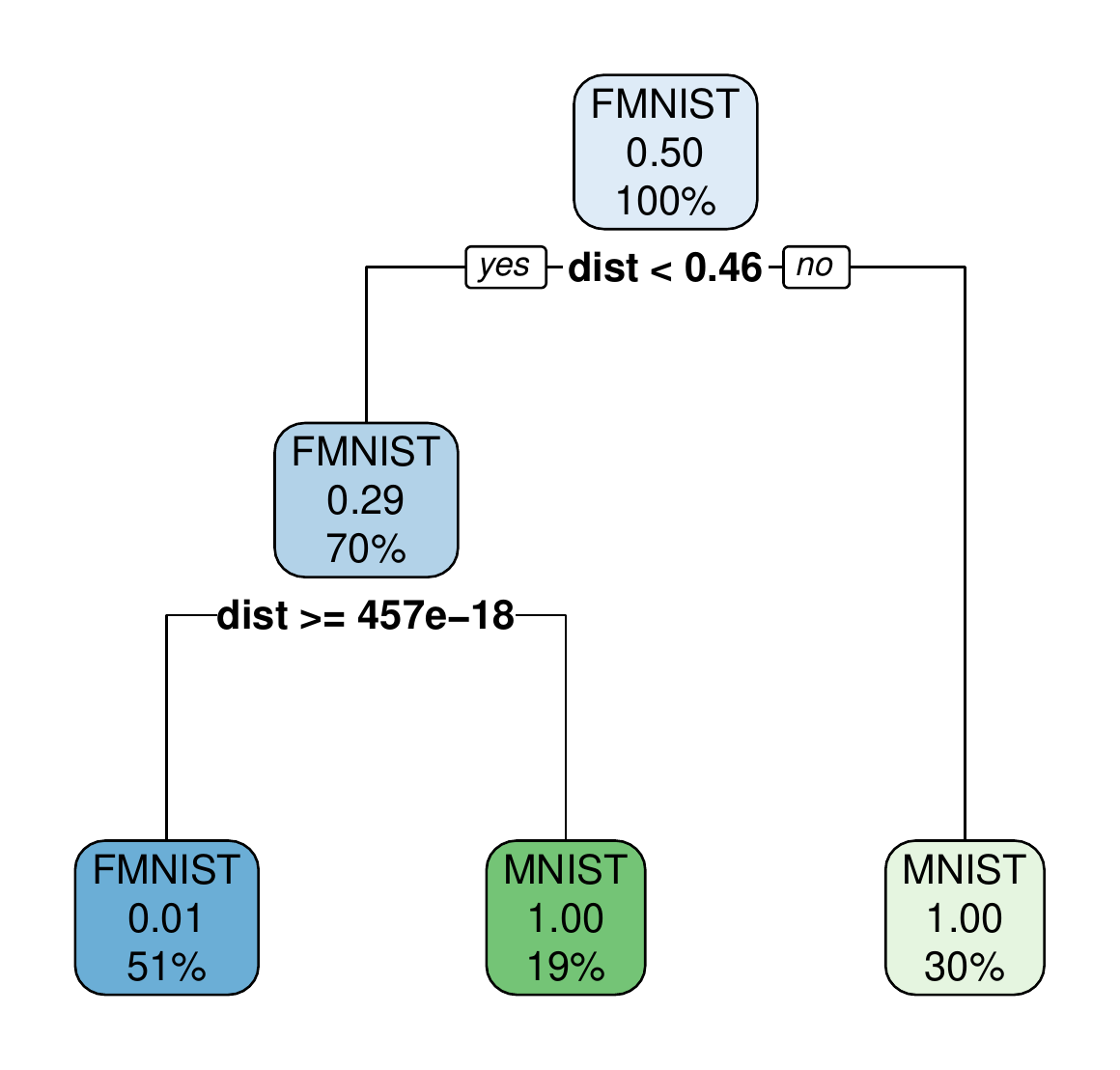}
	
	\caption{Decision trees to distinguish MNIST and Fashion MNIST images using the distance from the Bradley-Terry manifold. Left: using the Bayes covariant method, right: using the method of Wu-Lin-Weng. Note that the left branch corresponds to 'yes' and the right branch to 'no'.}
	\label{fig:trees}
\end{figure}

\section{Conclusion}

\subsection{Discussion}

Let us  discuss the successes and failures discovered via our experiments. 

The first noticeable failure is to obtain better accuracy (pairwise or multi-class) with pairwise coupling methodology using the \emph{same} number of parameters as the baseline convolutional network. We hypothesize that this maybe caused by allocating equal number of parameters to pairwise classification tasks that vary significantly in difficulty (e.g. shirt/t-shirt contrast seems to be by far the most difficult, see Figure \ref{fig:pair-detail} and Table \ref{tab:conf1}). If that is so, then the success of standard multi-class architecture (model $F$) suggests that CNN are able to automatically allocate more capacity to the more challenging tasks. Another possible explanation is that in the presence of multiple classes the convolutional network is able to learn multi-class features that go beyond ones learnable in two-class setting.

However, the subpar pairwise accuracy of micro-networks  and subpar multi-class accuracy of micro-models should be contrasted by the success of mini-networks and mini-models. All mini-networks achieved higher pairwise accuracy that the baseline, and mini-models $A$ and $B$ (although not $C$ and $D$) with the Bayes covariant method (but not the method Wu-Lin-Weng) obtained statistically siginificant improvement in performance over the baseline. Recall that mini-models were designed to have approximately same arithmetical complexity in training to the baseline. We would like to point out that there are likely additional performance advantages to using mini-models:
\begin{itemize}
 	\item the restricted memory size in graphics accelerators favors smaller models, and binary classifiers have less parameters than their multi-class equivalents,
 	\item it is possible to train pairwise models in parallel \emph{without} any communication among the computing nodes,
 	\item models with fewer parameters may require fewer training epochs.
\end{itemize}

However, the accuracy improvements were  modest for both mini-networks and mini-models and by themselves unlikely to attract adoptions. But there are three additional explicability attributes, which are not present in commonly used CNN models.

The first is the ability to incrementally improve a multi-class classification system by incorporating specialized binary classifiers. It is even not mandatory that the specialized binary classifier outperforms the IIA restriction of the previous system. Our results show  that diversity together with a pairwise coupling methodology is able to improve performance even in cases of subpar pairwise performance (cf. Figure \ref{fig:fixes-lm}). Thus we were able to confirm  the phenomenon of \emph{coupling recovery} in real data setting, which has been previously suggested on synthetic data experiments \cite{HT}, \cite{WLW}. 

The second is the ability to gauge randomness in  predicted likelihoods by building many more multi-class systems out of just a couple of pairwise-coupled systems (Section \ref{sec:randomness-explicability}). In this case we started to see a marked difference between the coupling method of Wu-Lin-Weng and the Bayes-covariant coupling method. The former seems to give too much confidence to a single pairwise decision, whereas the latter seems to take into account information from all decisions, leading to much more evenly distributed posteriors. 

The third explicability attribute is the sureness explicability i.e. ability to give probability to the answer "I don't feel confident to make a prediction at all". Due to inherent redundancies of the pairwise coupling model it is possible to derive various measures of sureness. In this case, the Bayes covariant method provided much more satisfactory probabilities, clearly distinguishing cases where it should know the answer, and the cases where it shouldn't feel confident to make a prediction.

We also examined two methodological variations in creating pairwise coupling models. The first was using complementary log-log layer instead of softmax. As seen Figure \ref{fig:anova1} shows, this alternative leads to inferior performance . The second was using non-informative prior like encoding of dependent variable, and this step did not lead to noticeable improvement in classification accuracy. 

\subsection{Open problems}

There is a lot of opportunities to further explore the interaction between convolutional neural networks and pairwise coupling. Our initial results are limited in scope - examining just one dataset, no hyperparameter training nor auto-design of network architecture. Further  evaluations on applied computer vision tasks are obviously desirable. 

Perhaps the most important unexplored issue is how to adapt pairwise coupling methodology to datasets with large number of classes, where training effort and parameter counts become unwieldy. One possibility is to adopt hierarchical classification methodology \cite{hierarchy1}, \cite{hierarchy2}. Another is to devise new coupling methods for incomplete pairwise likelihood matrices.  

Another important open problem is to design a rigorous methodology for designing neural networks for binary classification problems. One may expect that better binary classifiers translate to better multi-class performance in  pairwise-coupled models.

We hope to investigate some of these questions in the future.

\section*{Acknowledgments}

\noindent This work was supported by grant VEGA 2/0144/18. We gratefully acknowledge in-depth discussions with prof. M. Klimo on the subject of sureness explicability.

\section{Bibliography}

\bibliographystyle{elsarticle-num}
\bibliography{my}

\begin{thebibliography}{10}
\expandafter\ifx\csname url\endcsname\relax
  \def\url#1{\texttt{#1}}\fi
\expandafter\ifx\csname urlprefix\endcsname\relax\def\urlprefix{URL }\fi
\expandafter\ifx\csname href\endcsname\relax
  \def\href#1#2{#2} \def\path#1{#1}\fi

\bibitem{DeepLearning}
Y.~LeCun, Y.~Bengio, G.~Hinton, Deep learning, Nature 521 (2015) 436--44.
\newblock \href {http://dx.doi.org/10.1038/nature14539}
  {\path{doi:10.1038/nature14539}}.

\bibitem{abstaining1}
Y.~Chen, C.~Zhang, X.~Liu, G.~Wang, Focal liver lesion classification based on
  receiver operating characteristic analysis, Journal of Medical Imaging and
  Health Informatics 9~(2).

\bibitem{isl}
G.~James, D.~Witten, T.~Hastie, R.~Tibshirani, An Introduction to Statistical
  Learning: With Applications in R, Springer Publishing Company, Incorporated,
  2014.

\bibitem{Vapnik1}
C.~Cortes, V.~Vapnik, Support-vector networks, Machine Learning 20~(3) (1995)
  273--297.
\newblock \href {http://dx.doi.org/10.1007/BF00994018}
  {\path{doi:10.1007/BF00994018}}.

\bibitem{Vapnik2}
V.~Vapnik, The Nature of Statistical Learning Theory, Springer, 1996.

\bibitem{Mayoraz}
E.~N. Mayoraz, Multiclass classification with pairwise coupled neural networks
  or support vector machines, in: G.~Dorffner, H.~Bischof, K.~Hornik (Eds.),
  Artificial Neural Networks --- ICANN 2001, Springer Berlin Heidelberg,
  Berlin, Heidelberg, 2001, pp. 314--321.

\bibitem{PlattSVM}
J.~C. Platt, Probabilistic outputs for support vector machines and comparisons
  to regularized likelihood methods, in: ADVANCES IN LARGE MARGIN CLASSIFIERS,
  MIT Press, 1999, pp. 61--74.

\bibitem{HT}
T.~Hastie, R.~Tibshirani, Classification by pairwise coupling, Ann. Statist.
  26~(2) (1998) 451--471.
\newblock \href {http://dx.doi.org/10.1214/aos/1028144844}
  {\path{doi:10.1214/aos/1028144844}}.

\bibitem{pairwisePatRec}
B.~Quost, S.~Destercke,
  \href{http://www.sciencedirect.com/science/article/pii/S0031320317304247}{Classification
  by pairwise coupling of imprecise probabilities}, Pattern Recognition 77
  (2018) 412 -- 425.
\newblock \href
  {http://dx.doi.org/https://doi.org/10.1016/j.patcog.2017.10.019}
  {\path{doi:https://doi.org/10.1016/j.patcog.2017.10.019}}.
\newline\urlprefix\url{http://www.sciencedirect.com/science/article/pii/S0031320317304247}

\bibitem{SBT}
O.~{\v Such}, {\v S}.~{Be\v nu\v s}, A.~Tinajov\'a,
  \href{http://ceur-ws.org/Vol-1422/}{A new method to combine probability
  estimates from pairwise binary classifiers}, in: Information Technologies –
  Applications and Theory, ITAT 2015, 2015, pp. 194--199.
\newline\urlprefix\url{http://ceur-ws.org/Vol-1422/}

\bibitem{PKPD}
D.~Price, S.~Knerr, L.~Personnaz, G.~Dreyfus, Pairwise neural network
  classifiers with probabilistic outputs, in: G.~Tesauro, D.~Touretzky, T.~Leen
  (Eds.), Advances in Neural Information Processing Systems 7, MIT Press, 1995,
  pp. 1109--1116.

\bibitem{WLW}
T.-F. Wu, C.-J. Lin, R.~C. Weng,
  \href{http://dl.acm.org/citation.cfm?id=1005332.1016791}{Probability
  estimates for multi-class classification by pairwise coupling}, J. Mach.
  Learn. Res. 5 (2004) 975--1005.
\newline\urlprefix\url{http://dl.acm.org/citation.cfm?id=1005332.1016791}

\bibitem{SB}
O.~{\v Such}, S.~Barreda,
  \href{http://www.sciencedirect.com/science/article/pii/S0167865516302161}{Bayes
  covariant multi-class classification}, Pattern Recognition Letters 84 (2016)
  99 -- 106.
\newblock \href
  {http://dx.doi.org/https://doi.org/10.1016/j.patrec.2016.08.014}
  {\path{doi:https://doi.org/10.1016/j.patrec.2016.08.014}}.
\newline\urlprefix\url{http://www.sciencedirect.com/science/article/pii/S0167865516302161}

\bibitem{LIBSVM}
C.-C. Chang, C.-J. Lin, {LIBSVM}: A library for support vector machines, ACM
  Trans. Intell. Syst. Technol. 2~(3) (2011) 27:1--27:27.
\newblock \href {http://dx.doi.org/10.1145/1961189.1961199}
  {\path{doi:10.1145/1961189.1961199}}.

\bibitem{ZahorianNossair}
S.~A. {Zahorian}, Z.~B. {Nossair}, A partitioned neural network approach for
  vowel classification using smoothed time/frequency features, IEEE
  Transactions on Speech and Audio Processing 7~(4) (1999) 414--425.
\newblock \href {http://dx.doi.org/10.1109/89.771263}
  {\path{doi:10.1109/89.771263}}.

\bibitem{FashionMNIST}
H.~Xiao, K.~Rasul, R.~Vollgraf, Fashion-mnist: a novel image dataset for
  benchmarking machine learning algorithms (2017).
\newblock \href {http://arxiv.org/abs/cs.LG/1708.07747}
  {\path{arXiv:cs.LG/1708.07747}}.

\bibitem{MNIST}
Y.~LeCun, C.~Cortes, \href{http://yann.lecun.com/exdb/mnist/}{{MNIST}
  handwritten digit database} [cited 2016-01-14 14:24:11].
\newline\urlprefix\url{http://yann.lecun.com/exdb/mnist/}

\bibitem{EnergyAI}
E.~Strubell, A.~Ganesh, A.~McCallum,
  \href{http://arxiv.org/abs/1906.02243}{Energy and policy considerations for
  deep learning in {NLP}}, CoRR abs/1906.02243.
\newblock \href {http://arxiv.org/abs/1906.02243} {\path{arXiv:1906.02243}}.
\newline\urlprefix\url{http://arxiv.org/abs/1906.02243}

\bibitem{DropoutSurvey}
A.~Labach, H.~Salehinejad, S.~Valaee,
  \href{http://arxiv.org/abs/1904.13310}{Survey of dropout methods for deep
  neural networks}, CoRR abs/1904.13310.
\newblock \href {http://arxiv.org/abs/1904.13310} {\path{arXiv:1904.13310}}.
\newline\urlprefix\url{http://arxiv.org/abs/1904.13310}

\bibitem{BatchNormalization}
S.~Ioffe, C.~Szegedy,
  \href{http://proceedings.mlr.press/v37/ioffe15.html}{Batch normalization:
  Accelerating deep network training by reducing internal covariate shift}, in:
  F.~Bach, D.~Blei (Eds.), Proceedings of the 32nd International Conference on
  Machine Learning, Vol.~37 of Proceedings of Machine Learning Research, PMLR,
  Lille, France, 2015, pp. 448--456.
\newline\urlprefix\url{http://proceedings.mlr.press/v37/ioffe15.html}

\bibitem{Dropout1}
K.~He, X.~Zhang, S.~Ren, J.~Sun, Deep residual learning for image recognition,
  in: The IEEE Conference on Computer Vision and Pattern Recognition (CVPR),
  2016.

\bibitem{SKT}
O.~{\v Such}, A.~Tinajov\'a, M.~{Kont\v sek}, Neural pairwise classification
  models created by ignoring irrelevant alternatives, accepted to ITAT 2019,
  2019.

\bibitem{cloglog}
W.~W. Piegorsch,
  \href{https://amstat.tandfonline.com/doi/abs/10.1080/00031305.1992.10475858}{Complementary
  log regression for generalized linear models}, The American Statistician
  46~(2) (1992) 94--99.
\newblock \href
  {http://arxiv.org/abs/https://amstat.tandfonline.com/doi/pdf/10.1080/00031305.1992.10475858}
  {\path{arXiv:https://amstat.tandfonline.com/doi/pdf/10.1080/00031305.1992.10475858}},
  \href {http://dx.doi.org/10.1080/00031305.1992.10475858}
  {\path{doi:10.1080/00031305.1992.10475858}}.
\newline\urlprefix\url{https://amstat.tandfonline.com/doi/abs/10.1080/00031305.1992.10475858}

\bibitem{hierarchy1}
L.~Zhang, S.~Shah, I.~Kakadiaris,
  \href{http://www.sciencedirect.com/science/article/pii/S0031320317301899}{Hierarchical
  multi-label classification using fully associative ensemble learning},
  Pattern Recognition 70 (2017) 89 -- 103.
\newblock \href
  {http://dx.doi.org/https://doi.org/10.1016/j.patcog.2017.05.007}
  {\path{doi:https://doi.org/10.1016/j.patcog.2017.05.007}}.
\newline\urlprefix\url{http://www.sciencedirect.com/science/article/pii/S0031320317301899}

\bibitem{hierarchy2}
I.~Triguero, C.~Vens,
  \href{http://www.sciencedirect.com/science/article/pii/S0031320316000881}{Labelling
  strategies for hierarchical multi-label classification techniques}, Pattern
  Recognition 56 (2016) 170 -- 183.
\newblock \href
  {http://dx.doi.org/https://doi.org/10.1016/j.patcog.2016.02.017}
  {\path{doi:https://doi.org/10.1016/j.patcog.2016.02.017}}.
\newline\urlprefix\url{http://www.sciencedirect.com/science/article/pii/S0031320316000881}

\end{thebibliography}
\end{document}